\newtheorem{theorem}{Theorem}
\newtheorem{proposition}{Proposition}
\newtheorem{lemma}{Lemma}
\title{Ranking Large Language Models without Ground Truth}
\author{\bf
        Amit Dhurandhar$^{1*}$,
        Rahul Nair$^{2*}$,
        Moninder Singh$^{1*}$, \\
        \bf
        Elizabeth Daly$^2$,
        Karthikeyan Natesan Ramamurthy$^1$ \\ \\
        $^1$IBM Research, NY, USA, $^2$IBM Research, Ireland} 
\begin{document}
\maketitle
\begingroup\def\thefootnote{*}\footnotetext{Equal contribution}\endgroup

\begin{abstract}
Evaluation and ranking of large language models (LLMs) has become an important problem with the proliferation of these models and their impact. Evaluation methods either require human responses which are expensive to acquire or use pairs of LLMs to evaluate each other which can be unreliable. In this paper, we provide a novel perspective where, given a dataset of prompts (viz. questions, instructions, etc.) and a set of LLMs, we
rank them without access to any ground truth or reference responses. Inspired by real life where both an expert and a knowledgeable person can identify a novice our main idea is to consider triplets of models, where each one of them evaluates the other two, correctly identifying the worst model in the triplet with high probability. We also analyze our idea and provide sufficient conditions for it to succeed.
Applying this idea repeatedly, we propose two methods to rank LLMs. In experiments on different generative tasks (summarization, multiple-choice, and dialog), our methods reliably recover close to true rankings without reference data. This points to a viable low-resource mechanism for practical use\footnote{Code available at \url{https://huggingface.co/spaces/ibm/llm-rank-themselves}.}.
\end{abstract}

\section{Introduction}
\label{sec:intro}
Recent advancement in LLM capabilities have resulted in a significant challenge for assessing or measuring these capabilities. To benchmark LLM performance, we need a set of input prompts, outputs from the LLMs for these prompts, and a metric that measures how good the LLM performance is on its own or in comparison with other models. 

\begin{figure}[htbp]
    \centering
    \includegraphics[width=.99\columnwidth]{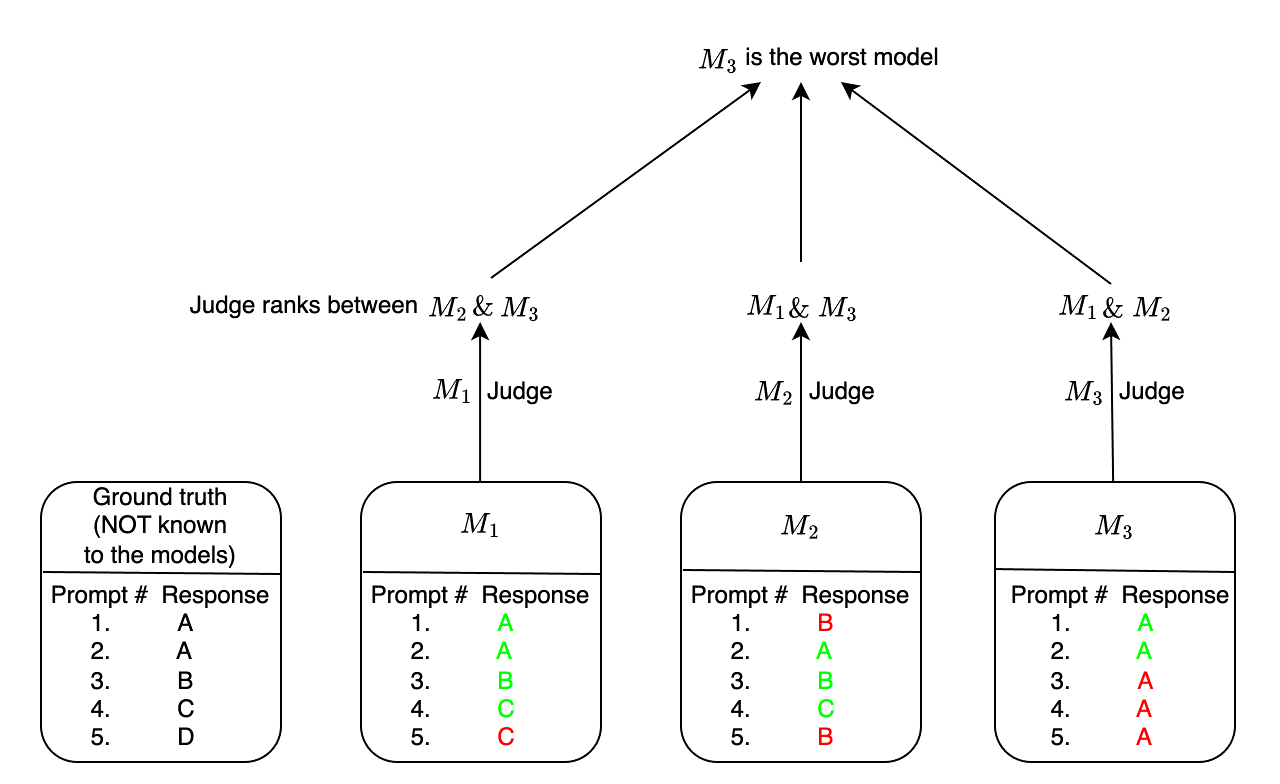}
    \caption{\small{We see the intuition behind the triplet approach. 
    The three models $M_1$, $M_2$ and $M_3$ have accuracies of $80\%$, $60\%$ and $40\%$ respectively based on their responses to five prompts (green are correct responses and red incorrect) when compared with the ground truth which is \emph{unknown} to us. Our triplet approach ranks $M_3$ as the worst model, since it is ranked as such by both $M_1$ (only two answers match) and $M_2$ (only one answer matches). This core idea (with slight variations) can be applied repeatedly to rank an arbitrary number of models as described by the algorithms in Section \ref{sec:meth}.}
    }
    \label{fig:triplet}
\end{figure}




A common strategy in benchmarking is to use datasets that come with both input prompts and reference (``ground truth'') responses in which case automated metrics can assess the model responses by comparing them with reference responses. These benchmarks have been shown to be sensitive to minor perturbations \cite{zhu2023judgelm} and the growing capabilities of LLMs can quickly render a static benchmark out-of-date \cite{laskar2023systematic}.  ``Ground truth'' labels are more readily available in classification and multi-choice question answering but the problem of collecting labels is more challenging in the context of generative tasks.  Additionally, research has also shown benchmarks which target certain metrics can be a poor proxy for assessing an LLM's performance \cite{ethayarajh2020utility}.



For these reasons, the idea of LLM-as-a-judge is starting to be employed in several scenarios in order to serve as a more accurate proxy for human preferences, which cannot be captured easily by simple metrics. However, LLM-as-a-judge may suffer biases  \cite{zhu2023judgelm} and also assume the pre-selected choice of a trusted LLM to serve as the judge. 
This paper aims to address the scenarios under which we cannot determine which LLMs can be trusted as judges.

We consider the setup where we have $n$ LLMs and a dataset of prompts, which could be questions like in Q\&A datasets or instructions such as in summarization or in other extractive/generative tasks. 
\textit{Our goal then is to rank these $n$ LLMs in the order of their performance on the chosen task. A priori, we do not assume anything about the quality of the models nor do we assume access to ``ground truth'' or reference responses that can act as the golden standard.}

The core idea of our method, which can be leveraged to design algorithms that rank models in practice, is  illustrated in Figure \ref{fig:triplet}. We consider three models at a time and let each one judge the rest. Based on the cumulative judgements, we decide which is the worst model for that round. 
Note that if we consider only two models, we will not be able to decide the worst model in the round with much trust, since there is no a priori assumption on model quality. The triplet of models idea stems from the real-life intuition that an expert in an area should be able distinguish between a knowledgeable person and a novice. The knowledgeable person should also be able to rank the novice lower than the expert. Hence, both the expert and the knowledgeable person will rank the novice lower than the other and this consensus will help us identify the novice.

This core idea is a part of both the greedy and full ranking methods we propose in this paper (Algorithms \ref{algo:gtr} and \ref{alg:ranking}). Other strategies may also be possible that stem from this idea. Being a judge in Figure \ref{fig:triplet} does not necessarily imply only prompting the judge to choose between two responses, but one could also use NLP metrics (viz. ROUGE, BERT Score, NLI models) to find \textit{closeness} of the responses of other models to the judge's responses, and pick the closest response as the winner. In contrast to our approach, in recommender systems, \textit{items} are recommended to \textit{users} based on some notion of user preference, whereas here the LLMs (equivalently considered as \textit{items}) rank each other without any other intervention or availability of partial ratings.

As a part of this work, we analyze the triplet approach and provide conditions for it to succeed. We also discuss the time complexities of the proposed methods. We show that using summarization, multiple choice question-answering, and dialog as applications, our methods retrieve rankings reliably without any reference data. We exceedingly see the benefit of this approach in summarization and dialog, that is tasks where the responses are long texts, rather than single tokens such as in multiple choice, as other strategies such as ensembling of responses could be employed in the latter case.

\section{Related Work}
\label{sec:related}

\textbf{Benchmarking} Benchmarking is heavily relied upon by the community to assess the performance of LLMs. It is viewed as one of the most important problems that need to be addressed with urgency, but it is also recognized that there are no ``one-size-fits-all`` solutions to this problem.  In addition to performance benchmarking, there can be trust and safety benchmarking along different dimensions such as bias, toxicity, social stigma, stereotype, privacy, adversarial robustness, social norms, and machine ethics. There are also comprehensive benchmarks such as HELM \cite{liang2023holistic} that include both performance and safety measures. The benchmarking results are conveniently hosted in locations such as the Open LLM Leaderboard \cite{open-llm-leaderboard, eval-harness}, HELM leaderboard \cite{liang2023holistic}, TrustLLM leaderboard \cite{sun2024trustllm}, and LLM Safety leaderboard \cite{wang2023decodingtrust}. 

For tasks with binary labels such as classification, or multiple choice question answering (MCQA), usually the metric used is accuracy. \citep{ye2024benchmarking} discuss the necessity for incorporating uncertainty when measuring an LLM's performance. For generative tasks, the metrics used can be based on comparison of the generated text to a reference text such as BLEU \cite{papineni2002bleu}, ROUGE \cite{lin-2004-rouge}, ROUGE-N \cite{10.3115/1073445.1073465}, METEOR \cite{banerjee2005meteor}, and BERT score \cite{Zhang2020BERTScore}. Natural Language Inferencing (NLI) benchmarks require human level understanding in order to assess given a premise statement whether it supports (entailment), contradicts or  has no relationship (neutral) in relation to a hypothesis statement \cite{2013Dagan,bowman2015large}. Libraries such as Unitxt \cite{unitxt} combine these metrics and many others in a customizable manner. Computing accuracy or a related measure on human judgements or preferences is also a very common approach for benchmarking. 

Human preference of model responses along the dimensions of helpfulness and harmlessness are collected in the HH-RLHF dataset \citep{bai2022training}. Another significant dataset with quality labels and human preferences of model responses is OpenAssistant Conversations \cite{kopf2023openassistant}. Human preference data for model responses to 80 multi-turn questions was collected in the MT-Bench dataset \cite{zheng2023judging}.

\paragraph{Model Selection}
There are benchmarking deployments \textit{in the wild} such as \textit{Chatbot Arena}\footnote{\url{https://chat.lmsys.org/}} \cite{zheng2023judging} where humans provide the input prompts as well as provide their preference based on the output of two models. Chatbot Arena uses the Elo rating system \cite{elo1967proposed}, popularly used for rating Chess players, to rank LLMs. While collecting the input prompts to create a dataset is time-consuming, it is even more challenging to collect human labels, judgements, or preferences, particularly in cases where the deployment of models need to be made domain-specific. The number of applications and use cases may be so varied, and the human expertise need may be so high, that it will be too expensive to collect such data. 


\paragraph{LLM-as-a-judge} Trusted LLMs have been used as a proxy for human preferences and feedback \cite{chiang2023can}. Collecting human preferences are costly, however it has been shown metrics associated with benchmarks are not rich enough to capture the many and nuanced aspects that quantify what represents a \textit{good} output a person might prefer. PandaLM \cite{wang2023pandalm} is trained on human-annotations in order to evaluate model performance on criteria difficult to capture in existing benchmarks reflecting human preferences in order to select a model out of several candidates. Other applications include acting as substitute for a traditional measurement of a user defined criteria. Instead of creating a traditional classification or regression model, the practitioner can specify the custom criteria in free text \footnote{\url{https://python.langchain.com/docs/guides/evaluation/string/criteria_eval_chain}}. More generally, LLM-as-a-judge has also been used to rank model performance \cite{chiang2023vicuna}. 

While more scalable than human-evaluations, leveraging an LLM-as-a-judge has some limitations. \citet{zhu2023judgelm} demonstrated that LLM judgments may suffer from positional bias (e.g. prefers the first answer), knowledge bias (e.g. lacks the knowledge required) and formatting bias (e.g. has a preference for how the judgement is presented to the LLM). Another limitation is the assumption of one true judge that reflects the preferences and values the solution requires. In the case of seeking feedback on answers that appear natural we can imagine many LLMs are capable of this task, however if the task is specialised or domain dependant the question of \textit{which} LLM can act as the judge may be unknown. There exist many models that have been specialised in specific domains such as law \cite{chalkidis2020legal,shaghaghian2020customizing}, finance \cite{wu2023bloomberggpt} and medicine \cite{thirunavukarasu2023large}.

\section{Methods}
\label{sec:meth}
We now propose two methods based on the triplet idea to rank a given set of models when no ground-truth is available for the given dataset. The first one is a more efficient greedy approach, while the latter considers all triplets and performs an evolving weighted majority based ranking.

\subsection{Greedy Triplet Ranking}
Given a set of $n$ models to rank, a dataset of prompts (with no responses) and an evaluation function that compares the model responses with the judge model, the Greedy Triplet Ranking (GTR)
method in Algorithm \ref{algo:gtr} outputs a ranking of these $n$ models by incrementally relegating the worst model in a triplet. In particular, starting at an arbitrary triplet it identifies the worst model, removes it from the triplet, then adds the next model in the set. It does this over all models, where at the end we would have the top two models (i.e. first time the \emph{for} loop is executed). Now with the remaining models it repeats this process again finding the top two in the remaining set. This continues until we have less than three models. One of the top two from the first run is (randomly) selected as the best\footnote{One could also use ranking from the final triplet comparison to determine the best.} and this model is used to resolve the ordering of the pairs of top models that are outputted at the end of the subsequent \emph{for} loop runs.

In this way all the models get ranked at the end, where in each triplet comparison we greedily relegate the worst model to the following run.

\begin{algorithm}[t]
\SetAlgoLined
\textbf{Input:} $M$ set of $(\ge 3)$ models to rank, $f$ evaluation function and dataset $D$ of prompts (viz. questions, instructions, etc.).

\textbf{Initialize:} $R\leftarrow \phi$ \# Current ranked list of models

\While{$|M|\ge 3$}
{
$T = \{M_1,M_2,M_3\}$ \# Triplet to begin

\For{$i=3$ to $|M|$}
{
$T \leftarrow T \cup M_i$

$W \leftarrow$ Using $f(T,D)$ compare pairs of models with the third being the judge and output the model voted as the worst by the other two. In case of tie output $M_i$.

$T \leftarrow T \setminus W$
}
$M\leftarrow M\setminus T$ \# Remove top two models as they will be added to $R$

\eIf{$R==\phi$}
{$R\leftarrow list(T)$ \# Amongst top two models randomly pick one as the best.}
{
$T \leftarrow T \cup R_1$

$W \leftarrow$  Using $f(T,D)$ and $R_1$ as judge output the worse model.

$R \leftarrow $ append $(T\setminus W, W)$ to $R$
}
}
\uIf{$|M|==1$}
{
$R \leftarrow$ append $M$ to $R$
}
\uElseIf{$|M|==2$}
{$T \leftarrow M \cup R_1$

$W \leftarrow$  Using $f(T,D)$ and $R_1$ as judge output the worse model.

$R \leftarrow $ append $(T\setminus W, W)$ to $R$}
\KwOut{$R$}

\caption{Greedy Triplet Ranking (GTR).}
  \label{algo:gtr}
  
\end{algorithm}

\subsection{Full Triplet Ranking}
\label{subsec:full}
For Full Triplet Ranking (FTR) too we assume we are given a set of $n$ models, a dataset of prompts and an evaluation function that compares models based on the third being the judge (see Algorithm \ref{alg:ranking}). The difference from GTR is that we consider all triplet of models here and we compute a \emph{reputation score} for each model that is proportional to the number of (triplet) comparisons that they come on top. This score is then used in the next iteration to determine how much importance to give to a judge in the triplet comparisons, which in turn determines the reputation of other models in that iteration. 

This process continues until the reputations do not change by much. The final reputations are then used to rank the models.

\begin{algorithm}[htbp]
\SetAlgoLined
\textbf{Input:} $M$ set of $(\ge 3)$ models to rank, $g$ triplet evaluation function, dataset $D$ of prompts and $\epsilon$ a small constant.

Evaluate all triplets on $D$ (i.e. $M_i$-vs-$M_j$ using $M_k$) $y_{ijk} = g(\{M_i, M_j, M_k\},D), \forall i,j,k\in M$.

\textbf{Initialize:} Model reputation score $r_k = 1.0 \, \forall k\in M$.

\While{True}
{
    $m_{ij} = \frac{1}{|M|}\sum_k y_{ijk} r_k \,\forall M_i, M_j\in M$   \# Weighted preference matrix.
    
    $z_{ij} = 1$ if $m_{ij} \ge m_{ji} \,\forall M_i, M_j\in M$ and $0$ otherwise \# Majority vote.
    
     $r^\prime_i = \frac{1}{|M| -1} \sum_j z_{ij} \,\forall M_i\in M$ \# New reputation score.
     
    $\delta = \sum_k |r_k - r^\prime_k|$ \# Convergence condition.
    
    \eIf{$\delta\le \epsilon$}
        {
            Break
        }
        {
            $r_k = r^\prime_k$ \# Update reputation
        }
}
Sort $M$ using reputation score $r_k$ for ranking $R$.

\KwOut{$R$}

\caption{Full Triplet Ranking (FTR).}\label{alg:ranking}
\end{algorithm}

\subsection{Analysis}
\label{subsec:analysis}
We now analyze our triplet approach, which also provides intuition of when the approach is likely to work well in practice. We then analyze the time complexity of GTR and FTR.

\subsubsection{Conditions for the triplet approach to succeed}

Given a triplet of models $(M_i,M_j,M_k)$ and their corresponding (fractional) accuracies $(a_i,a_j,a_k)$ on a task, where w.l.o.g. assume $1\ge a_i>a_j>a_k\ge 0$ we want to analyze sufficient conditions under which model $M_k$ will be voted as the worse model by both $M_i$ and $M_j$ as judges. We analyze sufficient conditions since, given simply the ordered accuracies there are no specific necessary conditions on the accuracies for $M_k$ to be voted the worst as it could happen for all possible values.

First we analyze the case where none of the models will agree on incorrect responses to input prompts. In other words, no two models will have the same incorrect response on an input. If they are incorrect they will be incorrect in different ways.

\begin{lemma}
\label{lem1}
    Given a triplet of models $(M_i,M_j,M_k)$, where their accuracies $(a_i,a_j,a_k)$ satisfy $1\ge a_i>a_j>a_k\ge 0$ with no two models agreeing upon incorrect responses, then $a_k < a_i+a_j-1$ will result in $M_k$ being (correctly) voted as the worse model by both $M_i$ and $M_j$ as judges.
\end{lemma}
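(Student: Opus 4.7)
The plan is to translate the voting rule into counting agreements between responses: when $M_i$ serves as judge, it votes for $M_k$ as worse than $M_j$ precisely when $M_k$'s responses agree with $M_i$'s on strictly fewer prompts than $M_j$'s do. So the lemma reduces to comparing the pairwise agreement rates $\alpha_{ij} := \Pr[M_i \text{ and } M_j \text{ give the same response}]$ to $\alpha_{ik}$, and symmetrically $\alpha_{ji}$ to $\alpha_{jk}$ when $M_j$ is the judge.

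The key structural step is to exploit the ``no agreement on incorrect responses'' hypothesis to equate an agreement event with the event that both models are correct. That is, for any two models $M_a, M_b$ in the triplet, $\alpha_{ab} = \Pr[M_a \text{ correct} \wedge M_b \text{ correct}]$. From this, I would obtain the trivial upper bound $\alpha_{ik} \le \min(a_i,a_k) = a_k$ (and similarly $\alpha_{jk} \le a_k$), together with the inclusion–exclusion / Fréchet lower bound $\alpha_{ij} \ge \max(0,\, a_i + a_j - 1)$.

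Combining these two bounds finishes the argument: under the hypothesis $a_k < a_i + a_j - 1$, we get
\[
\alpha_{ij} \;\ge\; a_i + a_j - 1 \;>\; a_k \;\ge\; \alpha_{ik},
\]
so $M_i$ (as judge) sees $M_j$ agreeing with it strictly more than $M_k$ does, and hence votes $M_k$ as the worst. The very same inequality, read with $i$ and $j$ interchanged on the judge side, gives $\alpha_{ji} > \alpha_{jk}$, so $M_j$ also votes $M_k$ as the worst. Both judges concur, which is what the lemma asks.

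I do not anticipate a serious obstacle; the only point requiring care is justifying that $\alpha_{ab}$ reduces to the joint-correctness probability (this uses the hypothesis in an essential way, since without it two incorrect responses could also agree and inflate $\alpha_{ik}$), and that the Fréchet lower bound is tight enough given only the marginal accuracies. One should also note briefly that no assumption of independence between the models is used or needed, and that the bound is sufficient but not necessary, consistent with the discussion preceding the lemma.
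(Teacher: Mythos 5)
Your proof is correct and follows essentially the same route as the paper's: lower-bound the agreement between $M_i$ and $M_j$ by the Fr\'echet/inclusion--exclusion quantity $a_i+a_j-1$, upper-bound the agreement of $M_k$ with either judge by $a_k$ (using the no-agreement-on-incorrect-responses hypothesis), and compare. Your write-up is slightly more explicit than the paper's in identifying agreement with joint correctness, but the underlying argument is the same.
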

\begin{proof}
    We want to get the least overlap between correct responses of $M_i$ and $M_j$. This will happen when $M_j$ is correct on all the responses that $M_i$ is incorrect on, which is $1-a_i$ fraction. Note that $M_j$ can also be correct on all the responses that $M_i$ is correct on. The overlap then between $M_i$ and $M_j$ of correct responses is $a_j-(1-a_i)=a_j+a_i-1$. Hence, if $a_k < a_j+a_i-1$ then both $M_i$ and $M_j$ will rate $M_k$ to be worse than the other even if $M_k$ overlaps on all correct responses with $M_i$ and/or $M_j$.
\end{proof}

Note that the above is a sufficient condition so in practice our approach can work even if the above condition is not met as is seen in the experiments. This is especially true when $a_i+a_j \le 1$, where in practice we might still be able to identify $M_k$ as the worst, although the sufficient condition would require $a_k<0$.

If we allow two models to agree on up to $m$ fraction of the incorrect responses then the above result can be extended as follows:
\begin{theorem}
\label{thm1}
    Given the setup in Lemma \ref{lem1}, but where two models may agree on up to $m$ fraction of the incorrect responses, then $a_k < a_i+a_j-1-m$ will result in $M_k$ being (correctly) voted as the worse model by both $M_i$ and $M_j$ as judges.
\end{theorem}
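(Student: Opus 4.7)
The plan is to follow the structure of the proof of Lemma \ref{lem1} but now track two kinds of agreement between any pair of models separately: agreement on correct responses (which is what Lemma \ref{lem1} already handles) and agreement on incorrect responses (the new ingredient). For the worst case that maximizes the chance of $M_k$ being mis-ranked, I would minimize the number of matches between the judge and the truly-better model while maximizing the matches between the judge and $M_k$.

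First I would take $M_i$ as the judge comparing $M_j$ and $M_k$. As in Lemma \ref{lem1}, the minimum fraction of prompts on which $M_i$ and $M_j$ agree via both being correct is $a_i + a_j - 1$, achieved when $M_j$'s incorrect set is packed as much as possible into $M_i$'s correct set. Since the hypothesis only upper bounds the incorrect-agreement fraction by $m$, this component of the overlap can be taken as $0$ in the worst case, so matches$(M_i,M_j) \ge a_i+a_j-1$. Conversely, $M_i$ and $M_k$ can agree on all of $M_k$'s $a_k$ correct responses (if $M_k$'s correct set is contained in $M_i$'s correct set) and additionally on up to $m$ fraction of incorrect responses, giving matches$(M_i,M_k) \le a_k + m$. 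Hence the sufficient condition $a_i+a_j-1 > a_k+m$, i.e.\ $a_k < a_i+a_j-1-m$, guarantees that $M_i$ prefers $M_j$ over $M_k$.

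Next I would argue the same bound suffices when $M_j$ is the judge, by a symmetric computation. Again matches$(M_j,M_i) \ge a_i+a_j-1$ since the correct-overlap minimum is symmetric in $i$ and $j$, while matches$(M_j,M_k) \le a_k+m$ by the same correct-containment plus up-to-$m$ incorrect-overlap argument. The same condition therefore makes $M_j$ vote $M_k$ as worse than $M_i$, and combining both votes gives the claimed consensus.

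The main obstacle is really a feasibility check rather than a calculation: one has to confirm that the extremal configurations used to bound the two pairs simultaneously are jointly realizable, i.e.\ there exists a joint assignment of correct/incorrect answers across all three models in which matches$(M_i,M_j)$ saturates its lower bound while matches$(M_i,M_k)$ (resp.\ matches$(M_j,M_k)$) saturates its upper bound. This is straightforward because the construction in Lemma \ref{lem1} leaves the placement of $M_k$'s correct set and of the incorrect-agreement fraction $m$ free, so one can slot $M_k$'s correct responses entirely inside $M_i \cap M_j$'s correct intersection and allocate its remaining $1-a_k$ incorrect responses so that an $m$-fraction coincides with $M_i$'s (or $M_j$'s) incorrect answers without disturbing the $M_i$-vs-$M_j$ disagreement structure. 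Once this joint realizability is noted, the theorem follows immediately.
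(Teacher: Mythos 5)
Your proof is correct and follows essentially the same route as the paper's: lower-bound the judge's agreement with the better model by the minimal correct-overlap $a_i+a_j-1$ from Lemma \ref{lem1}, upper-bound its agreement with $M_k$ by $a_k+m$, and compare. The explicit symmetry argument for the second judge and the joint-realizability check are slightly more careful than the paper's one-line treatment, but they do not change the substance of the argument.
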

\begin{proof}
    From Lemma \ref{lem1} we know the minimum overlap between $M_i$ and $M_j$ can be $a_i+a_j-1$. However, $M_k$ could agree on $m$ fraction of incorrect responses with either $M_i$ or $M_j$ making its maximum possible agreement with $M_i$ or $M_j$ to be $a_k+m$. Hence, if $M_k$ is to be voted as the worst $a_k+m < a_i+a_j-1$.
\end{proof}

\emph{As such, we would expect $m$ to be small for LLMs, as they are typically used for generative tasks where the generations/responses can be multiple sentences or paragraphs (viz. summarization, writing articles, etc.) leading to a diverse set of possible responses and hence many different ways in which they may be incorrect.} In other words, there can be a plethora of ways in which they may respond incorrectly when they make mistakes and hence, overlap between incorrect responses for different LLMs is likely to be small, especially when they are from different model families, for many applications.

Although the sufficient conditions above might be a bit strict they provide the intuition, also seen in practice, that there needs to be some gap in performance between the best and the worst models for our schemes to work. This seems to be reasonable to have in practice, since if all models are of similar accuracy, then all rankings can be considered as equally good/bad.

We now prove that if the sufficient conditions in Theorem \ref{thm1} are met, then our GTR algorithm will output the correct ranking where the ranking of the top two models may be flipped.

\begin{proposition}
Given a set $M$ of models ($|M|\ge 3$) with a strict (accuracy) ordering on some dataset with the condition on accuracies in Theorem \ref{thm1} holding for all triplet of models in $M$, then the GTR algorithm will output the correct ranking, where the order of the top two models may be flipped with $\frac{1}{2}$ probability.
\end{proposition}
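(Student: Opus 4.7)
The plan is to combine Theorem~\ref{thm1} with a two-level induction tracking how GTR propagates the true top two through its outer while-loop and inner for-loop. Since the hypothesis of the proposition states that the accuracy condition of Theorem~\ref{thm1} holds on \emph{every} triplet drawn from $M$, I can invoke it as a black box: in every single triplet comparison performed anywhere inside GTR, the voted-worst model is exactly the strictly lowest-accuracy member of the triplet. This reduces the whole argument to a purely combinatorial claim about how GTR's loop structure moves models around.

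First I would prove the inner-loop invariant: after the for-loop has incorporated the first $i$ models of the current active set $M$ and removed the voted-worst, the surviving triplet $T$ contains exactly the top two accuracy models in $\{M_1,\dots,M_i\}$. The base case $i=3$ is immediate from Theorem~\ref{thm1}. For the inductive step, assume $T$ holds the top two of $\{M_1,\dots,M_i\}$, and add $M_{i+1}$. I would do a quick three-case check on where $M_{i+1}$ falls relative to the running top two (below the second best, between them, or above the best): in each case the lowest-accuracy member of the new triplet is precisely the model whose removal leaves the top two of $\{M_1,\dots,M_{i+1}\}$, and Theorem~\ref{thm1} guarantees that model is the one voted out. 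Hence when the for-loop terminates, $T$ is the true top two of the current $M$.

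Next I would stitch the outer iterations together. In the very first while-iteration, $R$ is empty, so $R$ is initialized to the (correct) top two in an arbitrary order: this is the unique place GTR makes an unresolved choice, and it accounts for the $\tfrac{1}{2}$ probability of a flip at the very top. In every subsequent while-iteration, the inner loop yields the true top two of the remaining $M$, and the algorithm compares them using $R_1$ as judge. Here I would reuse the agreement-counting argument from Lemma~\ref{lem1} and Theorem~\ref{thm1}: the triplet $(R_1,\text{better},\text{worse})$ satisfies the accuracy hypothesis by assumption, so $R_1$ agrees with the truly better of the pair on strictly more prompts than with the truly worse, and the append therefore places them in correct order. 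The tail cases $|M|=1$ and $|M|=2$ are handled by the same pairwise step or by direct append.

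The main (mild) obstacle is the three-case check inside the inner-loop induction; everything else is bookkeeping. A secondary point worth spelling out is that using $R_1$ as a single judge for a pair is not a new primitive — it is exactly the triplet test of Theorem~\ref{thm1} restricted to one judge, and the hypothesis gives it to us because $R_1$'s accuracy exceeds that of any model still being ranked. Combining the inner-loop invariant, the correctness of each $R_1$-judged pairwise comparison, and the single random choice at the top yields the claimed conclusion: every position below the top two is filled in strict accuracy order with certainty, while the top two are in correct order with probability $\tfrac{1}{2}$.
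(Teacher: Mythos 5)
Your proposal is correct and follows essentially the same route as the paper's proof: Theorem~\ref{thm1} is used as a black box so every triplet comparison (and every $R_1$-judged pair) eliminates the truly worst model, the inner for-loop therefore leaves the true top two of the current set, the single unresolved choice when $R=\phi$ yields the $\tfrac{1}{2}$ flip at the top, and the $|M|=1,2$ tail cases are handled directly. Your explicit inner-loop induction with the three-case check merely spells out what the paper asserts in one sentence, so there is no substantive difference.
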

\begin{proof}
In the (inner) \emph{for} loop of the GTR algorithm we will always correctly identify the top two models amongst those considered present in $M$ at that iteration of looping. This is because all triplet comparisons will correctly identify the worst model given Theorem \ref{thm1} as we iterate through the for loop leaving just the two best models of that iteration at the end. Ordering these top two models when $R\neq \phi$ can be done by $R_1$ correctly again because of Theorem \ref{thm1}. However, when $R=\phi$ we cannot resolve the ordering of the (overall) top two models and so we randomly pick one with probability $\frac{1}{2}$. Moreover, in the first looping of the \emph{for} loop all models undergo a triplet comparison at least once. Outside the \emph{while} loop when $M=1$ we have the worst model as it has lost in all rounds of triplet comparisons. When $M=2$ we can again pick $R_1$ as the judge and because of Theorem \ref{thm1} we should be able to correctly order these last models as well. 
\end{proof}

The FTR algorithm is harder to analyze because it weights the models in each iteration based on the extent to which they were up voted and is a more sophisticated heuristic 
whose analysis we leave for future work.

\subsubsection{Time Complexity}
The \emph{for} loop in the GTR algorithm identifies the top two models every time it is invoked and removes them from $M$. Thus, if there were $n$ total models in $M$
initially we do $O(n)$ comparisons in the \emph{for} loop. Since the size of $M$ reduces by a constant ($2$) every time we run the \emph{for} loop the total time complexity of GTR is $O(n^2)$.

FTR on the other hand performs all triplet comparisons and hence is $O(n^3)$.
\section{Experiments}
\label{sec:expts}

We test our methods on three cases: summarization task, multiple-choice, and dialog. As a basis of comparison we devise a new method based on the `most common answer' (MCA). The MCA ranker parses outputs from all LLMs to determine the most common output which is then treated as the reference to rank the models. For contexts such as (single token) multiple choice Q\&A, this is possibly the preferred method as it is easy to aggregate responses and ensembling in such contexts can have state-of-the-art performance \cite{ensem}.

To measure effectiveness of the methods, we compare the estimated rankings with `true' rankings derived using reference data. We use two metrics: Rank-biased Overlap (RBO) \cite{webber2010similarity} and Mean Average Precision@k (MAP-$k$) to quantify how well the estimated rankings reflect the `true' rankings. Both measures span the unit interval, with $1.0$ denoting perfect agreement. 

\paragraph{Summarization:} We test on two datasets: the CNN-Daily Mail (CNN/DM) summarization dataset \cite{hermann2015teaching, nallapati-etal-2016-abstractive}  and XSUM \cite{narayan2018don}. In both cases we gather responses from $40$ LLMs for $3000$ instances from HELM \cite{liang2023holistic}. Our evaluation functions, i.e. $f$ in GTR and $g$ in FTR, employ ROUGE scores which are widely used for summarization tasks and in HELM. In particular, we consider the f-measure of bigrams. For the `most-common answer', we use the top-$k$ bigrams from all model responses and consider this to be the reference answer ($k=256$ in our experiments). For this case, we also compare with LLM-as-a-judge approach using Prometheus-2 \cite{kim2024prometheus}, which is a state-of-the-art LLM evaluator or judge model. True ranking, against which our estimated ranking is compared, is inferred by comparing model outputs to the reference response in the dataset.

\paragraph{Multiple-choice:} We simulate responses from models with known accuracy in a multiple choice setting. Datasets are constructed for several cases by varying the number of models, model accuracy, number of prompts and number of possible answers. Table \ref{tab:app:synth} shows a sample. Since answers are discrete, the evaluation functions (i.e. $f$ in GTR and $g$ in FTR) use the equality operator to determine if a response is correct. We also experimented with a noisy equality operator, where the outcome of the evaluation is flipped with a known probability. This simulates a realistic case where comparisons are imperfect.

\paragraph{Dialog: } In our third case, we tackle a practical need of an internal team comparing several fine-tuned variants that optimize for multiple objectives. For this task, they gathered responses from four model variants for $100$ prompts from the Moral Integrity Corpus \cite{ziems2022moral}. The corpus contains annotations of chatbot logs that describe their latent values and moral statements. These responses were passed on to three human annotators who were tasked with determining the `best' response from the four models. Annotators provided a single preference per response.

We summarize the main empirical findings next with Appendix \ref{sec:app:details} providing more details on experimental setup and results.

\subsection{Results}
\label{subsec:results}

\paragraph{Summarization:} Overall, both GTR and FTR outperform the MCA as shown in Table \ref{tab:exp:summary}. The methods improve in recovering ranking as the size of the prompt dataset increases. In other words, both methods benefit from a larger number of prompts to make accurate evaluations. 

We experimented with the number of models to rank by sampling from the catalogue of $40$ models. Additional sampling experiments, where we considered mode performance, are reported in the appendix. Figure \ref{fig:exp:summarization:metrics} show rankings to be reliably recovered for most cases (recall an RBO value of 1.0 implies perfect agreement with true rankings). Quality does degrade as the number of models to be ranked increases. 

The number of triplet evaluations are shown in Figure \ref{fig:exp:evaluations}. The greedy approach requires far fewer evaluations to estimate rankings as compared to the full method. This is not a concern when evaluations are cheap (e.g. ROUGE scores), but can be prohibitive when they are not. For example, our experiments leveraging entailment probabilities using DeBerta \cite{he2020deberta} were scuttled due to expensive evaluations. In such cases, the greedy method is preferred.

Additional details on experiments are shown in Appendix \ref{sec:app:details} where we show the impact of prompt dataset size (larger datasets aid evaluations particularly for FTR). Additionally, we experiment with various sampling strategies to study the effect of model quality differences in ranking. We show empirically that models with discernible quality differences are easier to rank. We also note that the MCA is not the most natural approach in such contexts where the responses are free form, although based on our design of using bi-grams it was easily compatible with existing metrics like ROUGE.


\begin{figure}[ht]
     \centering
     \begin{subfigure}[b]{0.22\textwidth}
         \centering
         \includegraphics[width=\textwidth]{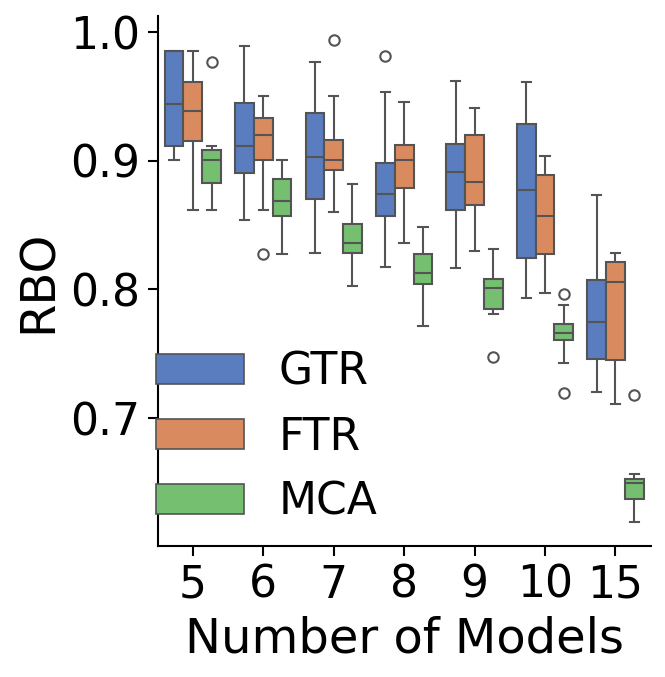}
     \end{subfigure}
     \begin{subfigure}[b]{0.22\textwidth}
         \centering
         \includegraphics[width=\textwidth]{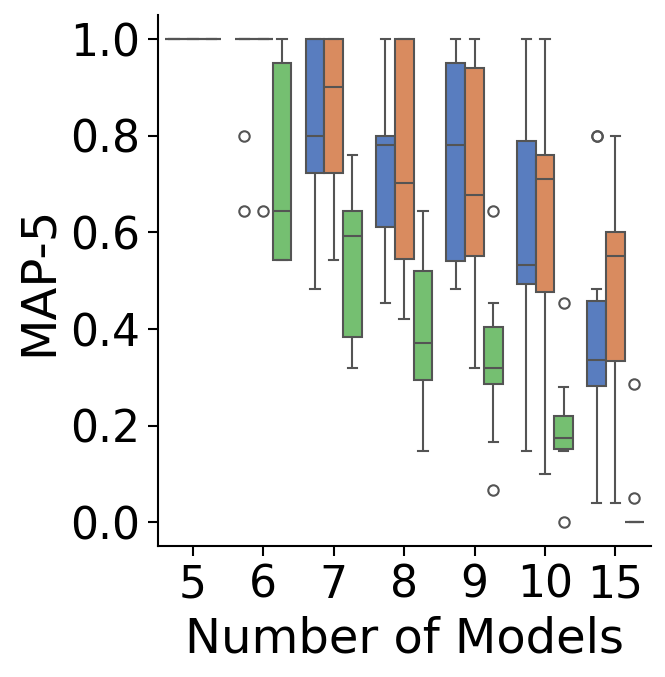}
     \end{subfigure}
    \begin{subfigure}[b]{0.22\textwidth}
         \centering
         \includegraphics[width=\textwidth]{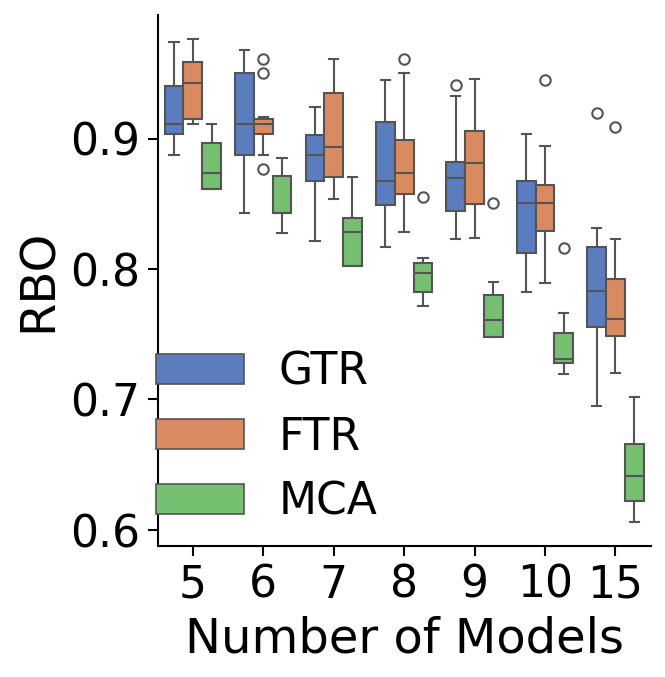}
     \end{subfigure}
     \begin{subfigure}[b]{0.22\textwidth}
         \centering
         \includegraphics[width=\textwidth]{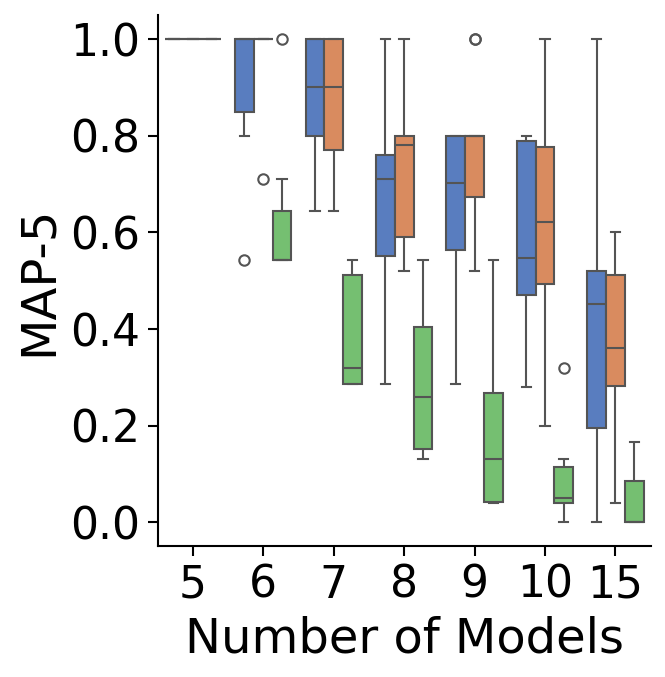}
     \end{subfigure}
        \caption{Evaluation metrics on summarization for two datasets: CNN/DM (top) and XSUM (bottom), RBO (left) and MAP-5 (right), as a function of number of models being ranked (note x-axis is \emph{not} linear).}
        \label{fig:exp:summarization:metrics}
\end{figure}



\begin{figure}[ht]
    \centering
    \includegraphics[width=.8\columnwidth]{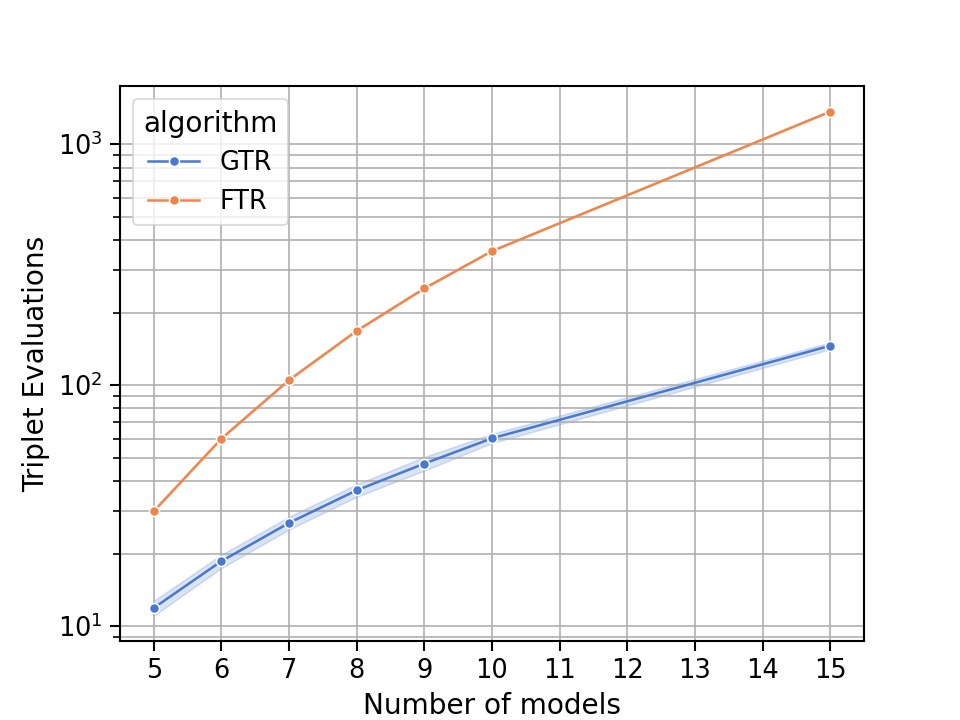}
    \caption{Number of triplet evaluations for CNN/DM dataset. (log y-scale).}
    \label{fig:exp:evaluations}
\end{figure}

\begin{table*}[ht]
    \centering
    \caption{Mean and standard deviation of Rank-biased overlap (RBO) and Mean Average Precision at $5$ (MAP-5) for varying sizes of prompts used for the summarization task. Computed over $10$ trials. Best (mean) result in bold.}
    \small
\begin{tabular}{l|l|lll|lll}
\toprule
 & & \multicolumn{3}{c}{RBO} & \multicolumn{3}{|c}{MAP-5} \\
& Size & MCA & FTR & GTR & MCA & FTR & GTR \\ \midrule
CNN/& 100 & 0.808$\pm$0.07 & 0.866$\pm$0.07 & \bf{0.867$\pm$0.07} & 0.445$\pm$0.33 & \bf{0.677$\pm$0.29} & 0.664$\pm$0.28 \\
DM&500 & 0.802$\pm$0.07 & 0.881$\pm$0.05 & \bf{0.882$\pm$0.05} & 0.430$\pm$0.33 & \bf{0.804$\pm$0.22} & 0.750$\pm$0.22 \\
&1000 & 0.804$\pm$0.07 & \bf{0.893$\pm$0.06} & 0.884$\pm$0.06 & 0.425$\pm$0.33 & \bf{0.815$\pm$0.23} & 0.763$\pm$0.23 \\
&3000 & 0.805$\pm$0.08 & \bf{0.889$\pm$0.05} & 0.885$\pm$0.07 & 0.460$\pm$0.33 & \bf{0.782$\pm$0.26} & 0.746$\pm$0.26 \\ \midrule
XSUM&100 & 0.803$\pm$0.07 & \bf{0.878$\pm$0.06} & 0.860$\pm$0.06 & 0.433$\pm$0.32 & \bf{0.744$\pm$0.24} & 0.723$\pm$0.26 \\
&500 & 0.799$\pm$0.07 & \bf{0.884$\pm$0.06} & 0.871$\pm$0.07 & 0.420$\pm$0.32 & \bf{0.758$\pm$0.25} & 0.716$\pm$0.26 \\
&1000 & 0.790$\pm$0.07 & \bf{0.888$\pm$0.06} & 0.877$\pm$0.06 & 0.374$\pm$0.33 & \bf{0.785$\pm$0.27} & 0.739$\pm$0.25 \\
&3000 & 0.788$\pm$0.08 & \bf{0.879$\pm$0.06} & 0.871$\pm$0.06 & 0.371$\pm$0.34 & \bf{0.761$\pm$0.26} & 0.731$\pm$0.26 \\
\bottomrule
\end{tabular}
    \label{tab:exp:summary}
\end{table*}

\paragraph{Multiple-choice: } The efficacy of our methods for multiple choice responses is summarized in Figure \ref{fig:exp:synth:metrics}. Interestingly, all three methods perform very poorly when the number of possible responses are low. To see why this is the case, consider a triplet evaluation with yes/no responses. A weak model that erroneously outputs `no' will judge another model that responds with `no' as better. The judge's response is highly correlated with weak models' outputs. In other words, as there are few failure modes and low variance in wrong answers, weak models tend to be promoted. Therefore, triplet evaluations are susceptible to cases where there are few possible outcomes. This validates the theory in Section \ref{subsec:analysis}, where we mentioned that we need $m$ (overlap of incorrect responses) to be small for our idea to be effective.

With increase in the size of the response set MCA, as expected, is highly performant as it ensembles responses from all the evaluated models to come up with an answer. For discrete responses that are easy to consolidate, such as single token responses in multiple choice, the most common answer is also likely to be the right answer. So this serves as a good proxy even when reference data is not available. Nonetheless, our triplet methods, especially FTR, still are quite competitive even in this case showcasing the generality of contexts in which they can be used.

Since the data in this case is simulated, we tested the impact of model performance on ranking recovery. Table \ref{tab:exp:synth:summary} shows that even when the correctness of outcomes are pure chance (i.e. 50\%) for the best of the models being ranked, the RBO metric is at $0.832$ for FTR and $0.723$ for GTR which indicate good agreement with true rankings.


\begin{table*}[ht]
    \centering
 \caption{Mean and standard deviation of Rank-biased overlap (RBO) and Mean Average Precision at $5$ (MAP-5) for varying accuracy of the best performing model for multiple-choice. Computed over $5$ trials. Best (mean) result in bold.}
 \small
\begin{tabular}{l|lll|lll}
\toprule
 Acc. best  & \multicolumn{3}{c}{RBO} & \multicolumn{3}{|c}{MAP-5} \\
model & MCA & FTR & GTR & MCA & FTR & GTR \\ \midrule
30 & 0.668$\pm$0.26 & \bf{0.694$\pm$0.26} & 0.622$\pm$0.21 & 0.378$\pm$0.39 & \bf{0.433$\pm$0.41} & 0.262$\pm$0.28 \\
50 & 0.818$\pm$0.23 & \bf{0.832$\pm$0.22} & 0.723$\pm$0.19 & 0.663$\pm$0.39 & \bf{0.698$\pm$0.38} & 0.452$\pm$0.33 \\
70 & \bf{0.927$\pm$0.16} & \bf{0.927$\pm$0.15} & 0.833$\pm$0.14 & \bf{0.866$\pm$0.29} & \bf{0.866$\pm$0.27} & 0.671$\pm$0.30 \\
90 & 0.980$\pm$0.09 & \bf{0.981$\pm$0.08} & 0.919$\pm$0.09 & 0.965$\pm$0.17 & \bf{0.971$\pm$0.15} & 0.855$\pm$0.20 \\
\bottomrule
\end{tabular}
   \label{tab:exp:synth:summary}
\end{table*}

We investigate the role of noise in the evaluation function. In multiple choice, this comparison is exact as it uses an equality operator. This is substituted with a \emph{noisy} equality operator, which randomly flips the outcome with a noise probability $p$. This allows us to evaluate the robustness of our methods in the face of noisy evaluations - which is a more realistic setting in generative cases. Figure \ref{figure:main:noise} shows the RBO metric as a function of noise for various methods. Both FTR and GTR are much more robust to noise at low and medium levels as compared to MCA. FTR does best here with its dynamic weighted voting making it less susceptible to noise. MCA on the other hand starts to degrade rapidly as noise increases. Additional plots are presented in Figure \ref{fig:app:synth:noise} in the appendix. This suggests our algorithms have broader applicability, including for single token classification type tasks.


\begin{figure*}[ht]
    \centering
    \includegraphics[width=0.85\textwidth]{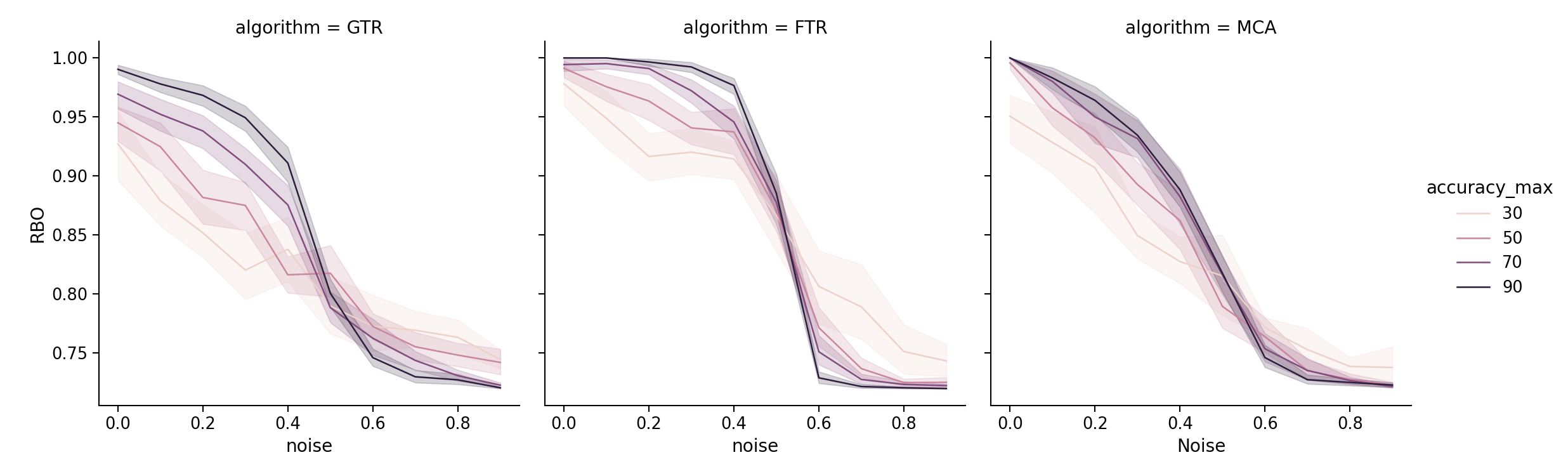}
    \caption{Quality of rankings recovered as a function of noise in the evaluation function for different methods. Four sets of $10$ LLMs with each set having a maximum LLM accuracy of $30\%$, $50\%$, $70\%$ or $90\%$ are considered, where the number of questions is $50$. We see the robustness of the proposed methods (GTR and FTR) at low to medium levels of noise in such a setup.}
    \label{figure:main:noise}
\end{figure*}

\begin{figure}[htb]
     \centering
     \begin{subfigure}[b]{0.22\textwidth}
         \centering
         \includegraphics[width=\textwidth]{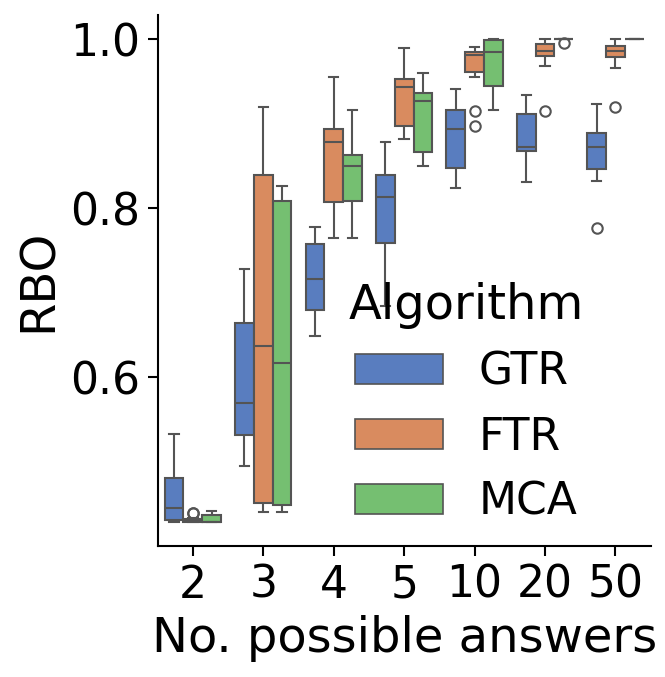}
     \end{subfigure}
     \begin{subfigure}[b]{0.22\textwidth}
         \centering
         \includegraphics[width=\textwidth]{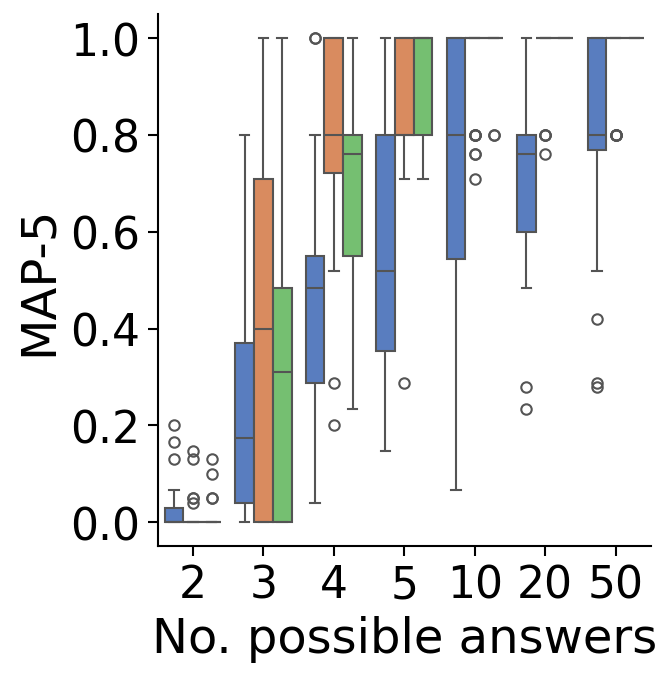}
     \end{subfigure}
        \caption{Evaluation metrics on multiple-choice, RBO (left) and MAP-5 (right), when ranking $25$ models where the accuracy of the best performing model is 50\%.}
        \label{fig:exp:synth:metrics}
\end{figure}



\paragraph{Dialog:} Our last use case is qualitative and ground in a practical application of assessing amongst several fine-tuned variants. Specifically, from a pre-trained model ($PT$), several reward functions were specified that aimed to fine tune for multiple moral objectives. Three variants ($M_1$, $M_2$, and $M_3$) were LLMs created by fine-tuning for each reward function using RLHF. Outputs of each model for $100$ prompts were recorded and scored by human annotators. The annotator task was simply to choose the `best' response, without qualifying any specific dimension for evaluation. Three annotators unrelated to this paper specified their preferences. We evaluated the FTR method using entailment probability and ROUGE scores as the basis for the evaluation function $g$. We ran only FTR here, since there were only four models and so efficiency wasn't a factor. Our method ranked the models $[M_1, M_3, M_2, PT]$ as compared to the human annotators $[M_1, M_2, M_3, PT]$. Thus, we were able to rank the best and the worst models correctly. The middle two were quite close in performance from what we heard so flipping them was probably not that damaging.

\paragraph{Comparison with LLM-as-a-judge} Next, we compare our approach to the growing LLM-as-a-judge literature \cite{zheng2024judging}. In this setting, language models themselves are used as evaluators. LLM judges used in this was offer lower cost of annotations/labels compared to human annotations and can make metric-free evaluations, for instance evaluating responses that meet criteria such as `child-friendliness' or `professional tone'. Prometheus \cite{kim2023prometheus} and its subsequent version \cite{kim2024prometheus} are two such specialized LLM evaluators. 

We leverage Prometheus-2 to compare LLM outputs in a pairwise fashion.
As such, we prompt it to compare the results of two summaries from two LLMs for the CNN/DM summarization dataset. This is done for all pairs of LLMs and for a sample of $50$ questions in the benchmark. For each evaluation, the LLM judge (i.e. Prometheus-2) declares a winning LLM. We aggregate win rates across the benchmark and 
derive a LLM ranking based on win rates. We consider a (random) sample of $10$ LLMs from the benchmark. Our task criteria is simply `Which is the better response?'.

Table \ref{tab:exp:llmj} shows the mean and standard deviation of ranking metrics compared to this approach. Our methods out-perform the LLM judge across all metrics. Additional details are included in the Appendix \ref{subsec:app:prometheus}. 

\begin{table}[]
    \centering
    \small
     \caption{Comparison with state-of-the-art Prometheus-2 LLM-judge using pairwise evaluation on CNN/DM summarization task with $10$ models. Best (mean) result in bold.}
     \begin{tabular}{l|lll}
\toprule
 Algorithm & RBO & MAP-3 & MAP-5 \\
\midrule
 FTR & 0.835$\pm$0.04 & 0.200$\pm$0.14 & \bf{0.523$\pm$0.16} \\
GTR & \bf{0.841$\pm$0.06} & \bf{0.300$\pm$0.26} & 0.448$\pm$0.23 \\
Prometheus-2 & 0.806$\pm$0.07 & 0.200$\pm$0.18 & 0.354$\pm$0.28 \\
MCA & 0.792$\pm$0.05 & 0.144$\pm$0.16 & 0.296$\pm$0.26 \\
\bottomrule
\end{tabular}
    \label{tab:exp:llmj}
\end{table}

\section{Discussion}
\label{sec:discuss}
Given the proliferation of LLMs, and claims of superiority made by the developers, there needs to be a trustworthy mechanism to evaluate them. LLM leaderboards are extremely beneficial for this purpose. However, it takes a lot of effort to collect reference data or judgements or preferences about the model responses, which only multiplies when we consider various domains and tasks which can also lead to benchmarks becoming obsolete. Our proposed approaches can be seen as a first pass to substantially reduce this effort without imposing the need to have a trustworthy model for evaluation as demanded by LLM-as-a-judge approaches. Moreover, in addition to simply obtaining a ranking, the relative values of the reputation score for different LLMs in FTR could potentially be used to also gauge the performance gap between them.

There are many interesting avenues to explore in the future. First, how we can incorporate and benefit from additional information such as a partial ordering between LLMs, which is reasonable to assume when some LLMs belong to a model family. The additional information could also be in the form of a few ground truth labels for the task at hand. In this case, there may be a wider set of possible methods such as those based on uncertainty or possibly even Bayesian approaches. Second, rather than triplets what if we consider larger sets to compare in each round. Of course, for FTR the number of comparisons might scale exponentially with the set size so more efficient variants might have to be thought of. Here it is possible that the variance in ranking the worse models might increase as a larger set of judges will produce  their own ranking. Moreover, asking each judge to rank a larger set of models might be more error prone. Nonetheless, the bias-variance trade-off between different set sizes might be interesting to explore. Third, more efficient variants than GTR may be worth exploring. For instance, in GTR the top two models at the end of the first round may be used to rank the rest resulting in a linear time variant. Also methods from matrix completion could be explored where we obtain a few partial orderings running GTR or FTR for a few iterations. Fourth, using a threshold in the triplet comparisons to determine a winner may reduce the possibility of selecting a winner based on random chance and further improve the ranking. Finally, our approach might find use in other contexts where, consolidation of responses into a single response is not easy such as: i) when evaluating non-expert human annotators on descriptive tasks. ii) In accelerated drug discovery type of applications where different algorithms might recommend different molecular structures that may be hard to consolidate or iii) even in multi-agent systems where a protocol is needed to decide which agent to follow in a specific context.

\section*{Limitations}
\label{sec:limitations}

We tested our proposed method only on English language corpuses. This could have introduced bias in the choices we made in implementations. Further, in the use case with Dialog data, the moral judgements of human annotators are subjective and hence the original rankings of the models may vary based on the annotators. These two limitations also hold for a vast majority of human preference benchmarks.

Our ranking will be subpar if there is high correlation amongst incorrect answers, which can happen in multiple choice with few options as is also seen in the experiments. This is why we recommend using our method for more complex generative tasks where the set of possible responses is large.

In addition, ROUGE is an imperfect metric even though it is widely used. We call upon the community to work on more sophisticated metrics that can be easily adapted based on the context. Finally, if the models that are used in ranking are all subpar in performance, our ranking could be incorrect, but in this case may be even a correct ranking has limited utility.

\section*{Ethics Statement}
Our work has the potential to improve the trustworthiness of LLMs by making larger scale evaluations possible with reduced human effort, since we do not need access to a reference dataset. Lack of substantial evaluations is one of the blockers for trustworthy adoption of LLMs, and it could be eased by our approach. In addition, this could be beneficial when the human labor involved for creating reference data or preferences requires looking at malicious content which could lead to psychological harm for the humans involved. It is however also true, that the ranking provided by our method should only be considered as an initial estimate and human oversight is necessary to ensure that the quality of models deemed good are sufficiently good for the application at hand.

\section*{Acknowledgements}

We thank Kush Varshney for initial discussions as well as encouraging the appropriate collaborations. We thank Djallel Bouneffouf for sharing data and human annotations on the dialog use case.



\bibliography{refs}
\bibliographystyle{acl_natbib}
\newpage
\appendix

\section{Experiment Details}
\label{sec:app:details}

\subsection{Summarization}
\label{subsec:app:summ}

We use summarization scenario with CNN/Daily Mail and XSUM and collect instance runs from HELM v0.2.3. Specifically, we use HELM APIs to gather the per instance responses of all benchmarked models ($40$ in all). For each summarization prompt, HELM runs three trials using various perturbations on the prompt. The resulting dataset consists of $1000$ summarization queries, each run with $3$ perturbations, resulting in $3000$ prompt-response pairs for each of the $40$ models. Each row in this dataset also contains the reference output used to compute the true rankings.

For our experiments, we take $n$ samples from the set of models, $n=[5, 6, 7, 8, 9, 10, 15]$ in our experiments. For each $n$ we conduct $10$ trials. Three sampling strategies are used. 
\begin{itemize}
    \item Random: $n$ models are sampled at random.
    \item Spread: We first compute true rankings of all $40$ models using same evaluation function $g$ and the reference output. We then use systematic sampling to selected $n$ models such that model performance is `spread' across the list. Specifically, we first generate from  $U[0, 40/n]$ and then select every $40/n$-th model from the ordered set of models. This provides a sample of models where performance is varied.
    \item Close: We order all models based on true ranking (similar to the case above). We define a window size $w$ and draw a sample $s$ from $U[w, 40 - w]$. We then sample $n$ samples from a subset in the window $U[s-w, s+w]$. This results in a sample of models with similar performance.
\end{itemize}

All results in the main paper are reported based on the random sampling strategy. Figures \ref{fig:app:summ:metrics-vs-sampling} and \ref{fig:app:summ:metrics-vs-sampling:xsum} show the assessment when the sampling strategy changes.

For each case, we vary the number of summarization prompts to be $[100, 500, 1000, 3000]$. The three methods FTR, GTR and the baseline are run on each combination of run configurations for 10 trials each. As evaluation metrics, we compute Rank Biased Overlap (RBO) with $p=0.95$ and Mean Average Precision @ k, with $k = 3, 5, 10$. As the evaluation function, $f$ in GTR and $g$ in FTR, we use the ROUGE-2 fmeasure with is a common summarization metric and also used in HELM.

To complete the number of evaluations for CNN/DM, Figure \ref{fig:app:xsum:eval} shows the number of evaluations for the XSUM summarization task, showing similar scaling. 

\begin{figure}[ht]
    \centering
    \includegraphics[width=\columnwidth]{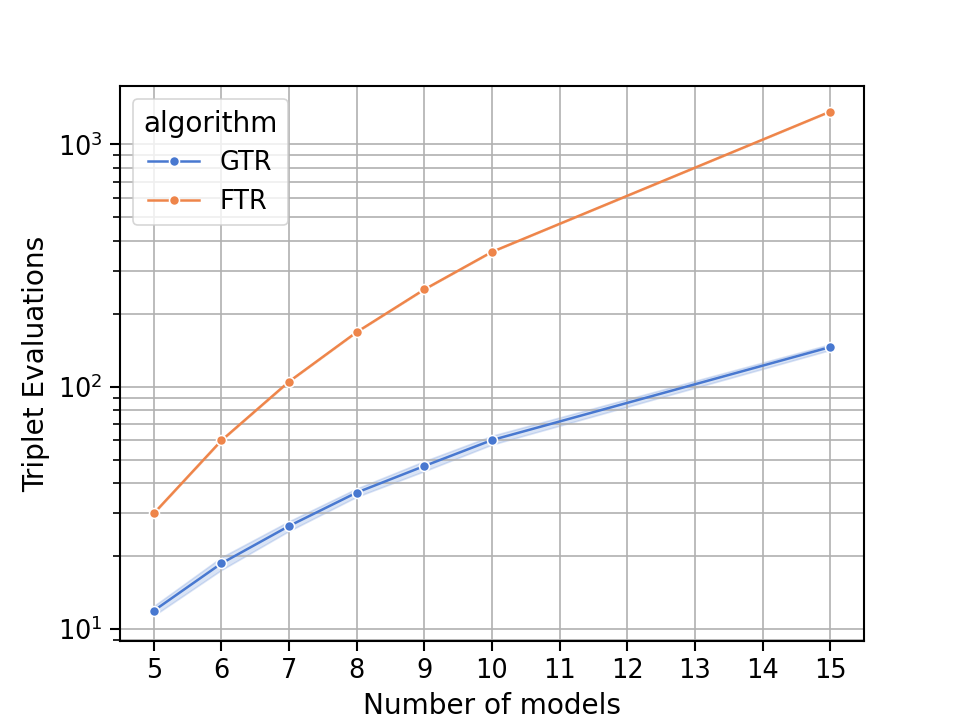}
    \caption{Triplet evaluations for XSUM summarization task.}
    \label{fig:app:xsum:eval}
\end{figure}

Figure \ref{fig:app:summ:metrics-vs-size} shows the impact of prompt size with both FTR and GTR benefitting from a larger corpus of prompts to evaluate over. 

\begin{figure*}[ht]
    \centering
    \begin{subfigure}[b]{\textwidth}
        \centering
        \includegraphics[width=\textwidth]{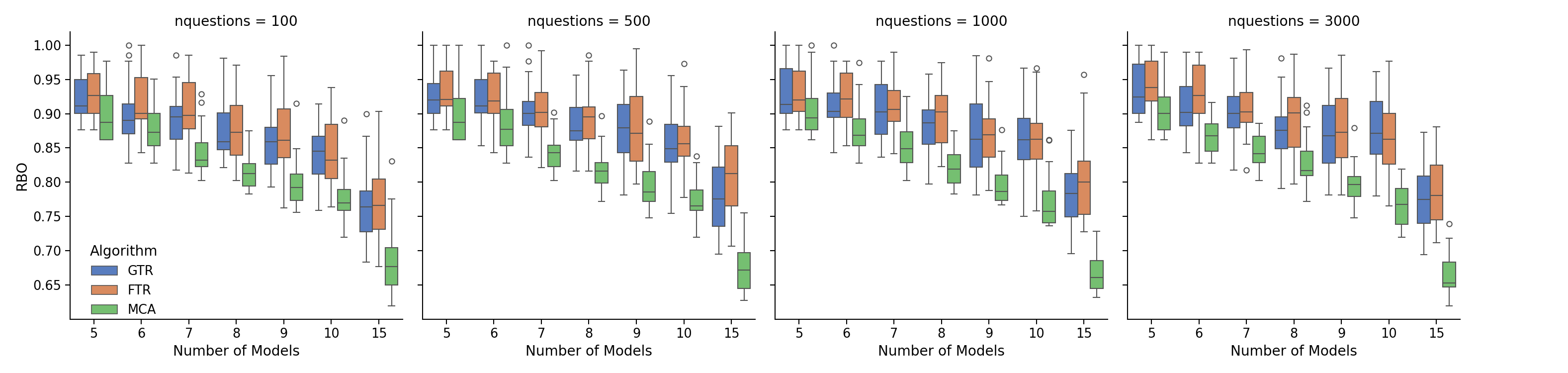}
    \end{subfigure}
    \begin{subfigure}[b]{\textwidth}
        \centering
        \includegraphics[width=\textwidth]{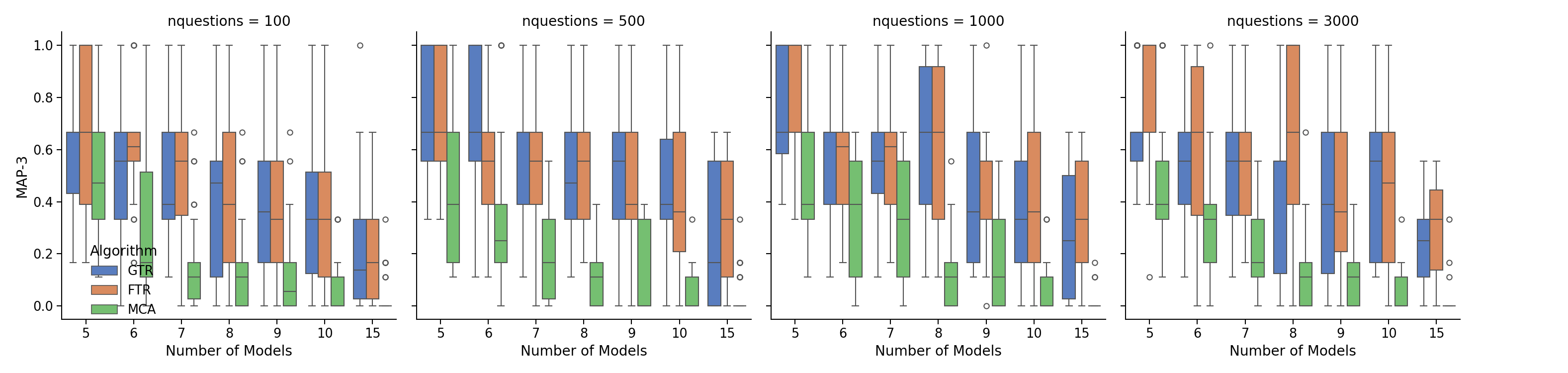}
    \end{subfigure}
    \begin{subfigure}[b]{\textwidth}
            \centering
            \includegraphics[width=\textwidth]{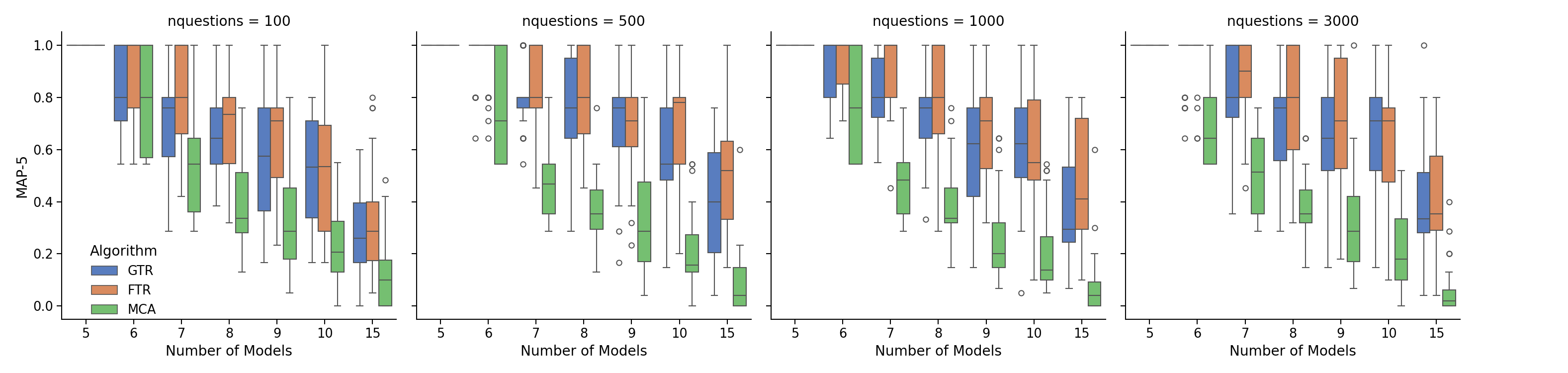}
    \end{subfigure}
       \caption{Summarization - CNN/DM dataset: Evaluation measures as a function of prompt dataset size. RBO (top), MAP-3 (middle), MAP-5 (bottom).}
       \label{fig:app:summ:metrics-vs-size}
\end{figure*}

\begin{figure*}[ht]
    \centering
    \begin{subfigure}[b]{\textwidth}
        \centering
        \includegraphics[width=\textwidth]{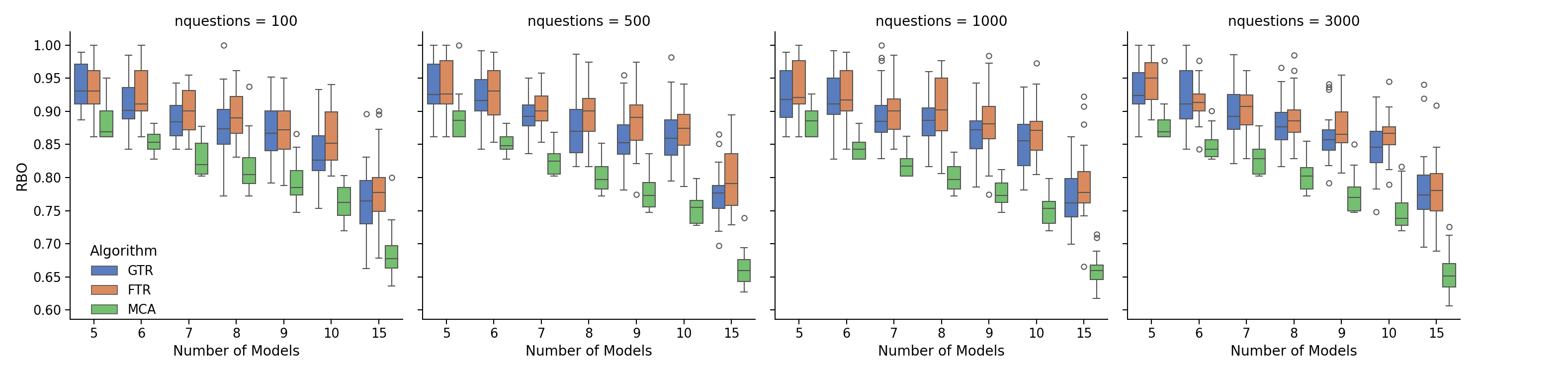}
    \end{subfigure}
    \begin{subfigure}[b]{\textwidth}
        \centering
        \includegraphics[width=\textwidth]{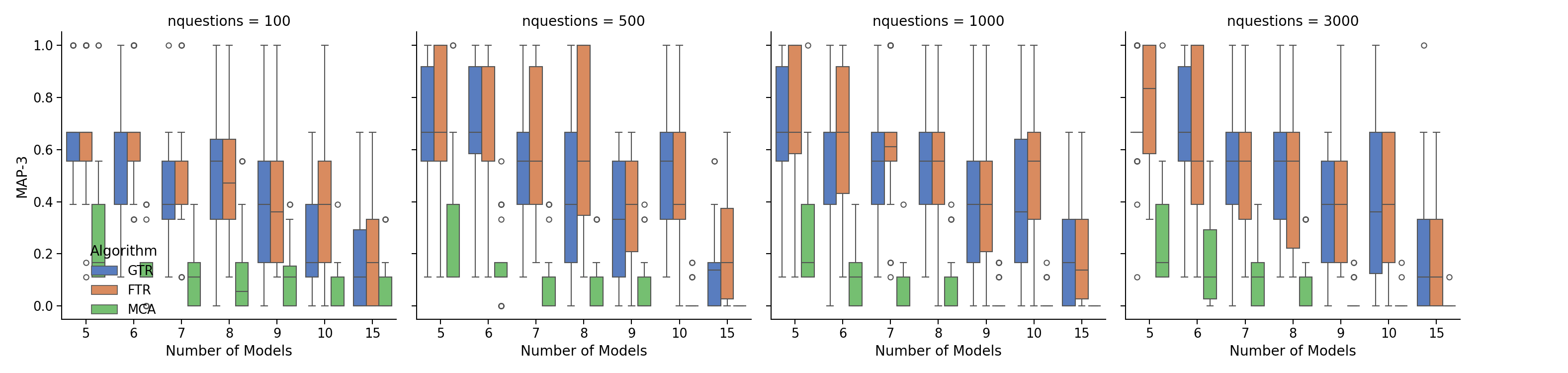}
    \end{subfigure}
    \begin{subfigure}[b]{\textwidth}
            \centering
            \includegraphics[width=\textwidth]{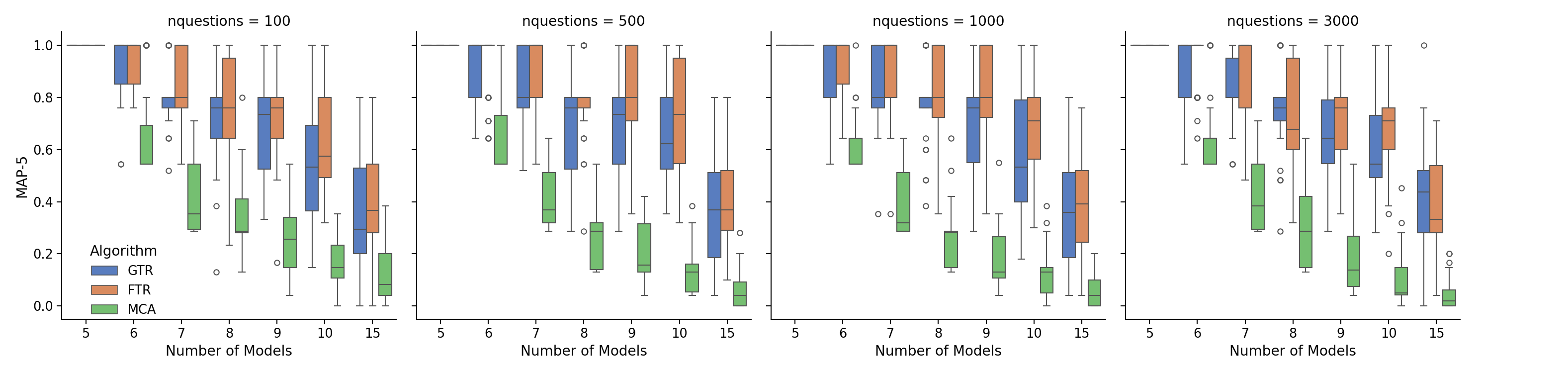}
    \end{subfigure}
       \caption{Summarization - XSUM dataset: Evaluation measures as a function of prompt dataset size. RBO (top), MAP-3 (middle), MAP-5 (bottom).}
       \label{fig:app:summ:metrics-vs-size:xsum}
\end{figure*}

The different sampling strategies yielded different performance characteristics as shown in Figure \ref{fig:app:summ:metrics-vs-sampling}. When model performance is similar (sampling = close), estimating ranking is more challenging as it is harder to distinguish between model responses. All test methods appear to have similar performance in this case. On the other hand, when model perfomances are dissimilar (sampling = spread), rankings are recovered with more efficacy. The FTR model here performs very well. In the random case, the GTR and FTR methods out-perform the most-common-answer method considerably.
\begin{figure*}[ht]
    \centering
    \begin{subfigure}[b]{\textwidth}
        \centering
        \includegraphics[width=\textwidth]{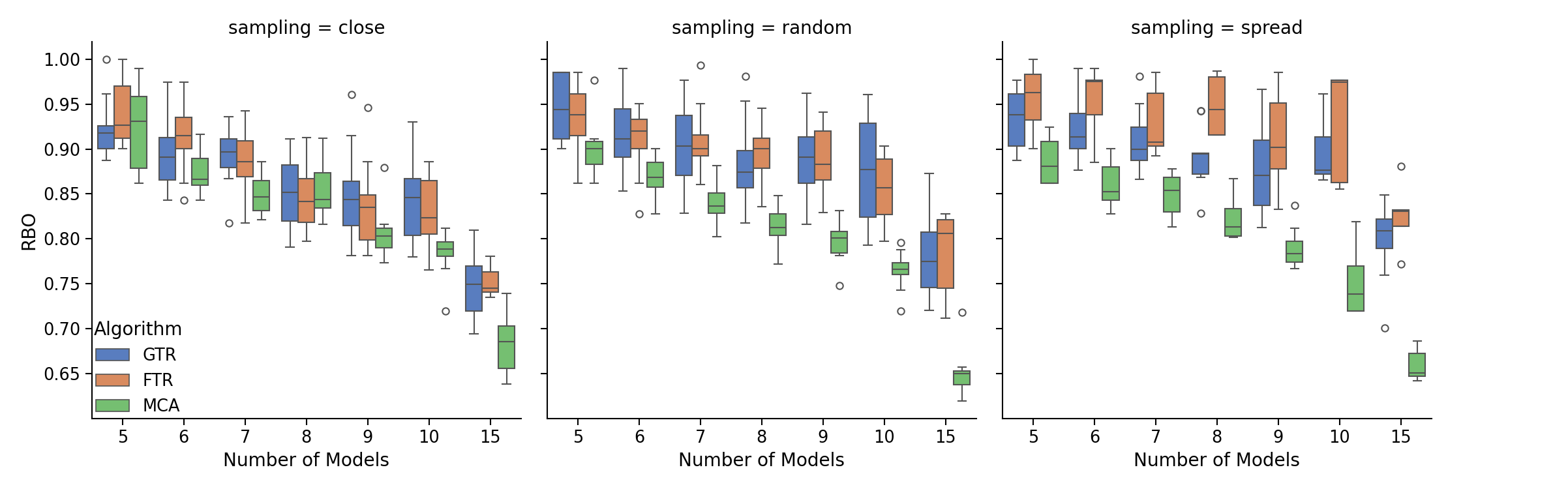}
    \end{subfigure}
    \begin{subfigure}[b]{\textwidth}
        \centering
        \includegraphics[width=\textwidth]{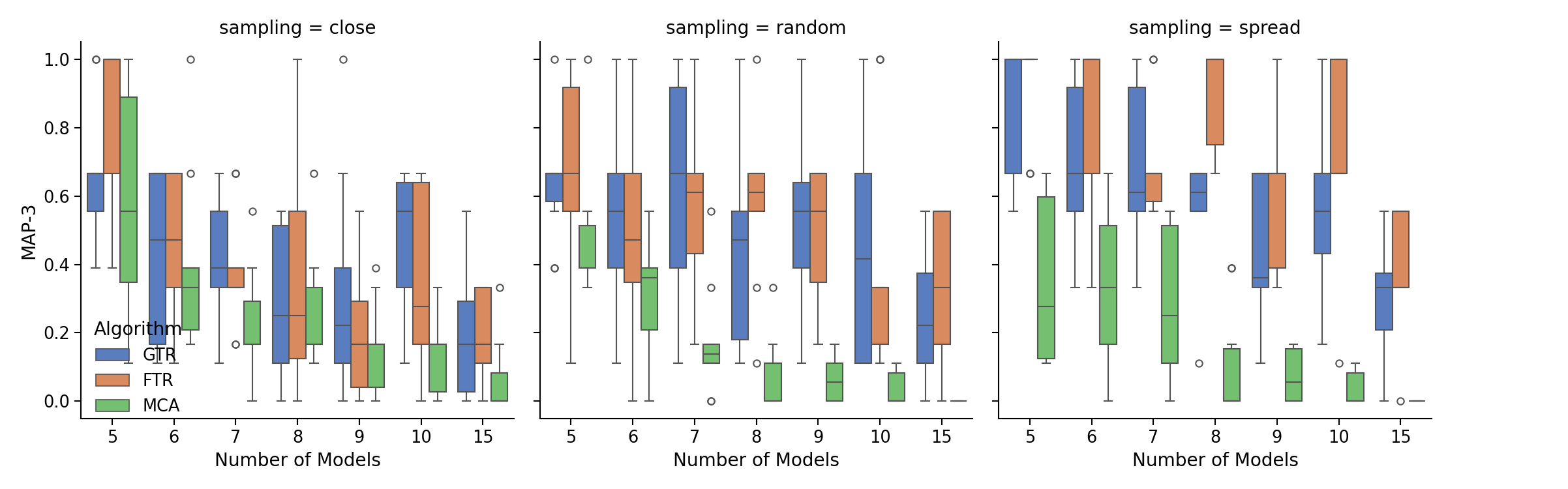}
    \end{subfigure}
    \begin{subfigure}[b]{\textwidth}
            \centering
            \includegraphics[width=\textwidth]{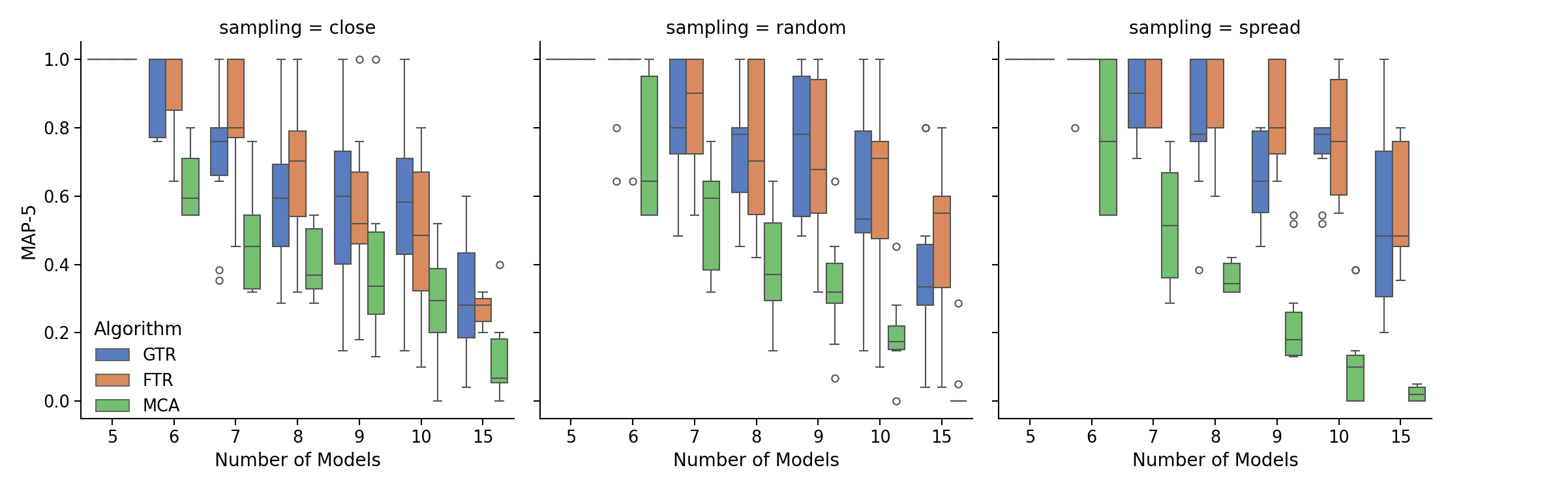}
    \end{subfigure}
       \caption{Summarization - CNN/DM: Evaluation measures as a function of sampling. RBO (top), MAP-3 (middle), MAP-5 (bottom).}
       \label{fig:app:summ:metrics-vs-sampling}
\end{figure*}

\begin{figure*}[ht]
    \centering
    \begin{subfigure}[b]{\textwidth}
        \centering
        \includegraphics[width=\textwidth]{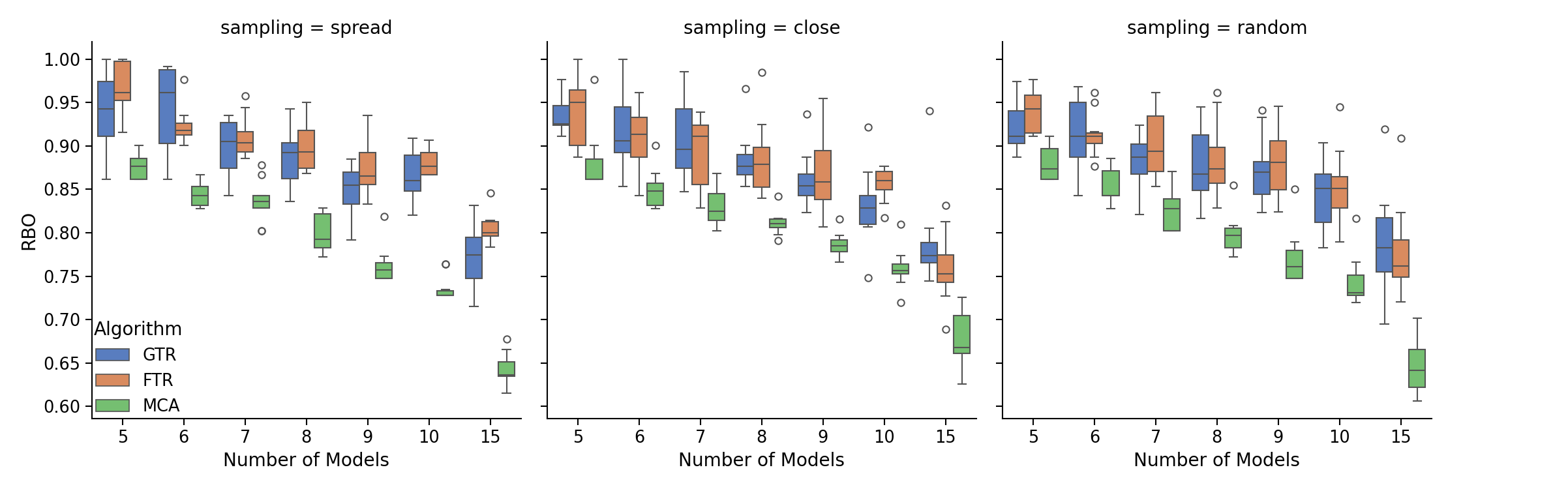}
    \end{subfigure}
    \begin{subfigure}[b]{\textwidth}
        \centering
        \includegraphics[width=\textwidth]{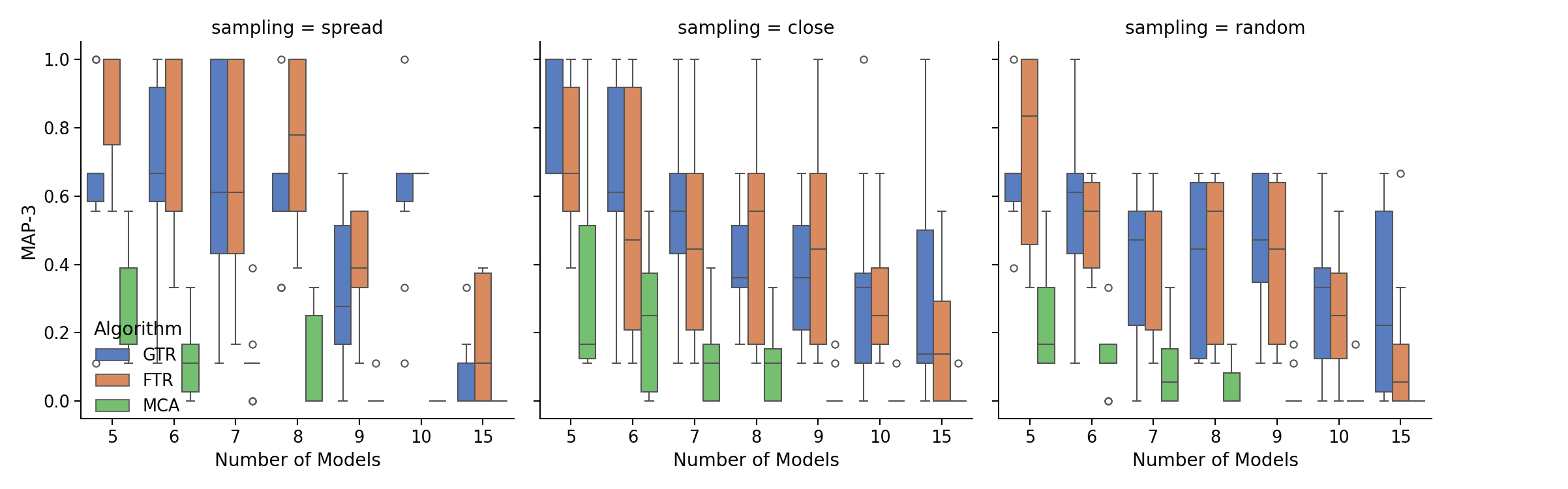}
    \end{subfigure}
    \begin{subfigure}[b]{\textwidth}
            \centering
            \includegraphics[width=\textwidth]{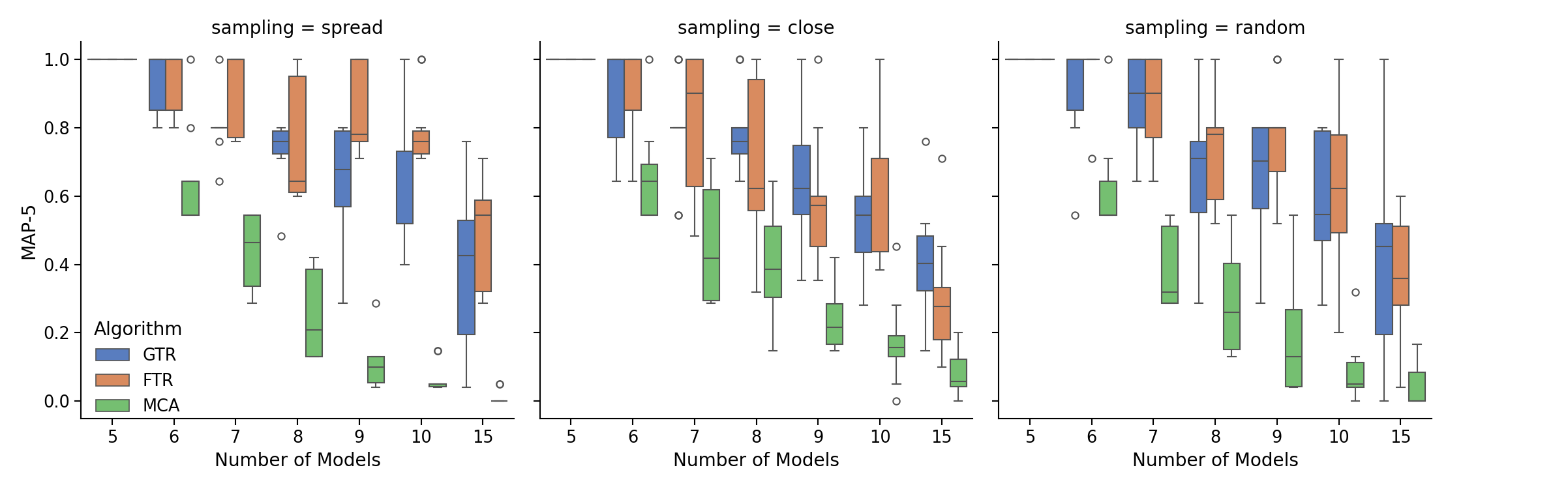}
    \end{subfigure}
       \caption{Summarization - XSUM: Evaluation measures as a function of sampling. RBO (top), MAP-3 (middle), MAP-5 (bottom).}
       \label{fig:app:summ:metrics-vs-sampling:xsum}
\end{figure*}

Figure \ref{fig:app:summ:bump} shows sample bump charts that visualize rank changes between the `true' (based on reference data) and the estimated ranks from the proposed methods. Figure \ref{fig:bump:a} shows perfect agreement between estimated and true rankings. In Figure \ref{fig:bump:b} rankings are not exactly recovered, but the clusters of good and bad models are persisted, even if the exact rankings of them within the clusters are not correct. This is also seen in Figure \ref{fig:bump:c} and \ref{fig:bump:d}.

\begin{figure*}
     \centering
     \begin{subfigure}[b]{0.24\textwidth}
         \centering
         \includegraphics[width=\textwidth]{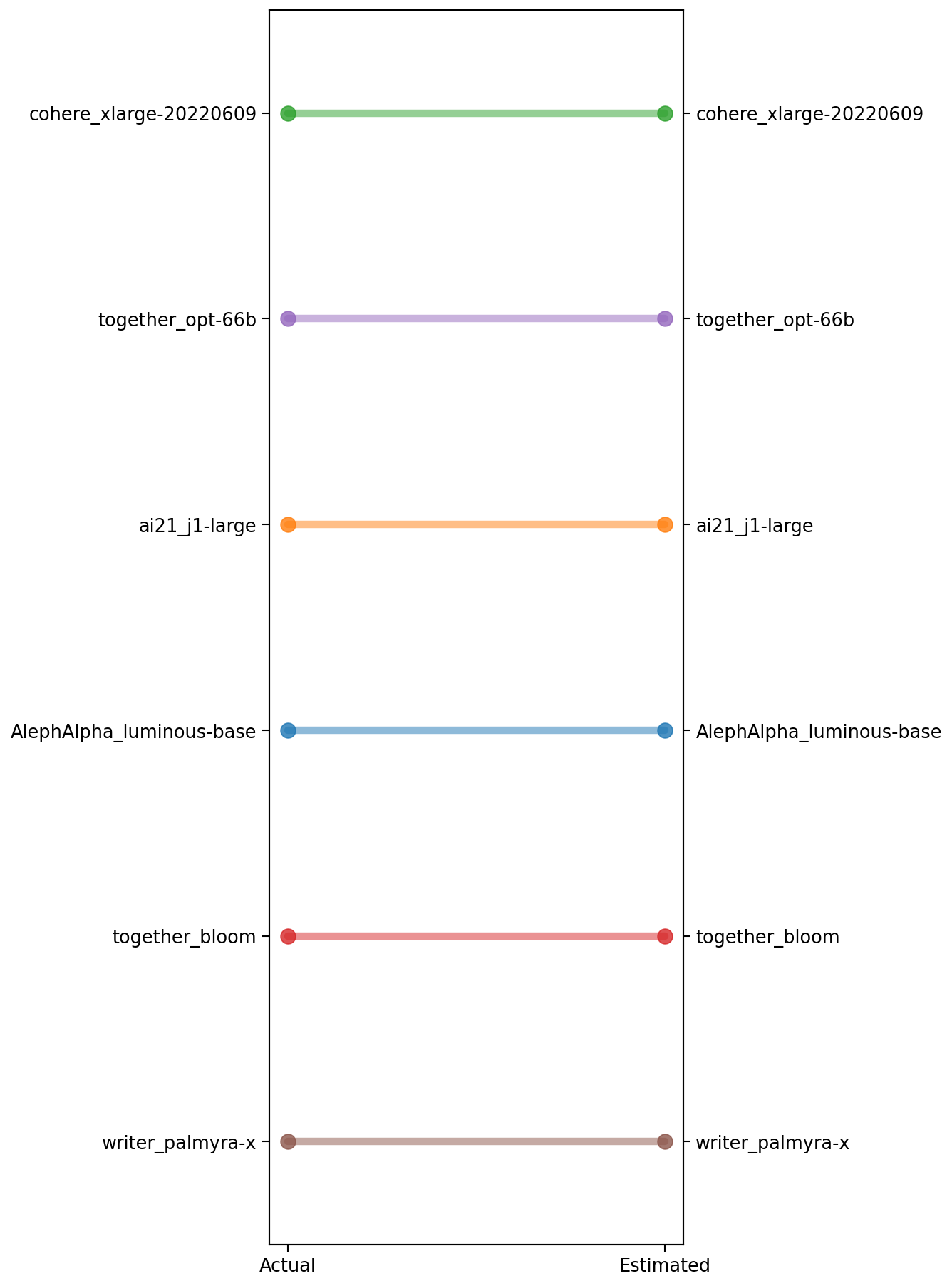}
         \caption{GTR, 6 models, close samples}
         \label{fig:bump:a}
     \end{subfigure}
     \begin{subfigure}[b]{0.24\textwidth}
         \centering
         \includegraphics[width=\textwidth]{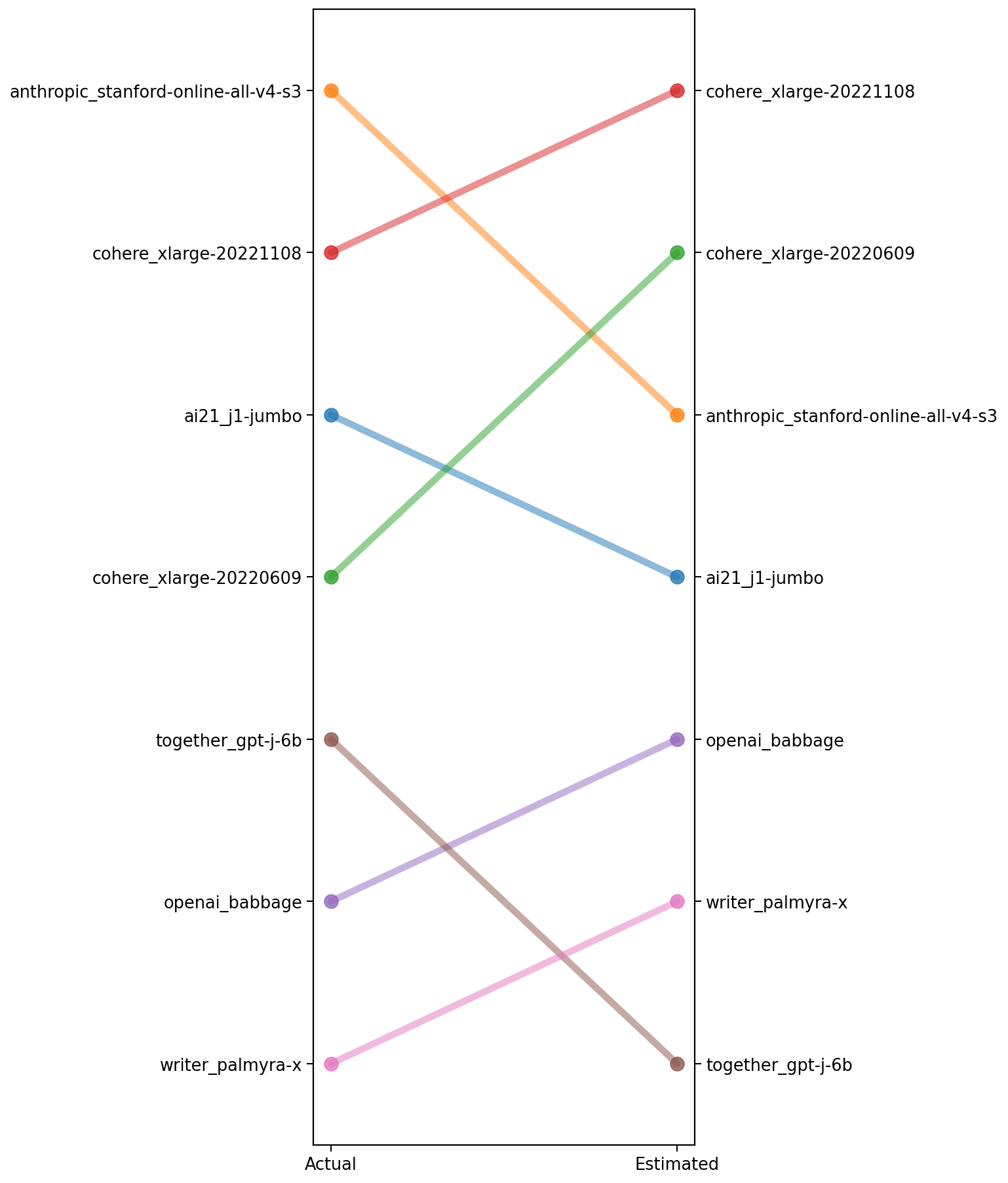}
         \caption{GTR, 7 models, spread samples}
         \label{fig:bump:b}
     \end{subfigure}
     \begin{subfigure}[b]{0.24\textwidth}
         \centering
         \includegraphics[width=\textwidth]{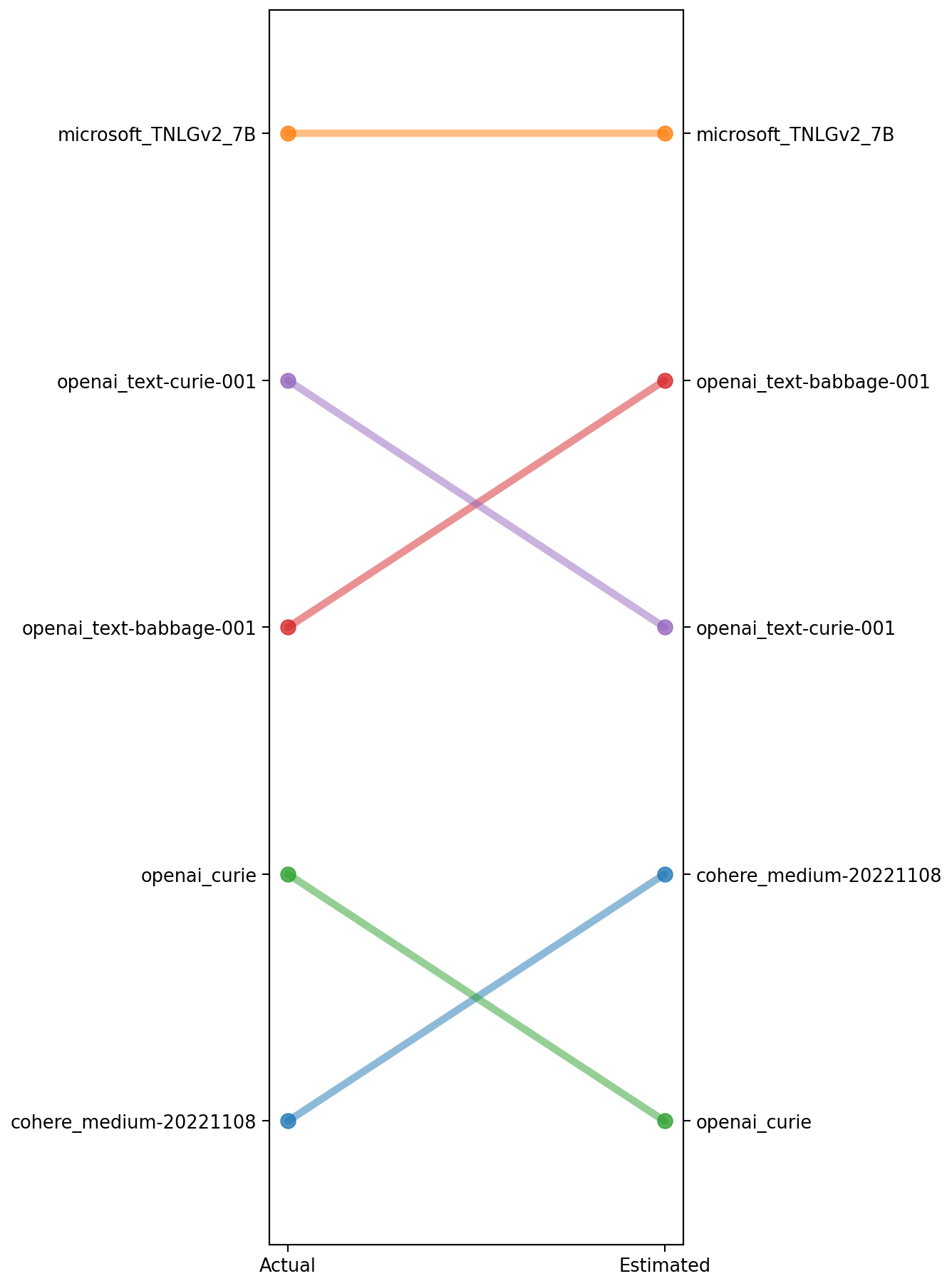}
         \caption{FTR, 5 models, close samples}
         \label{fig:bump:c}
     \end{subfigure}
     \begin{subfigure}[b]{0.24\textwidth}
         \centering
         \includegraphics[width=\textwidth]{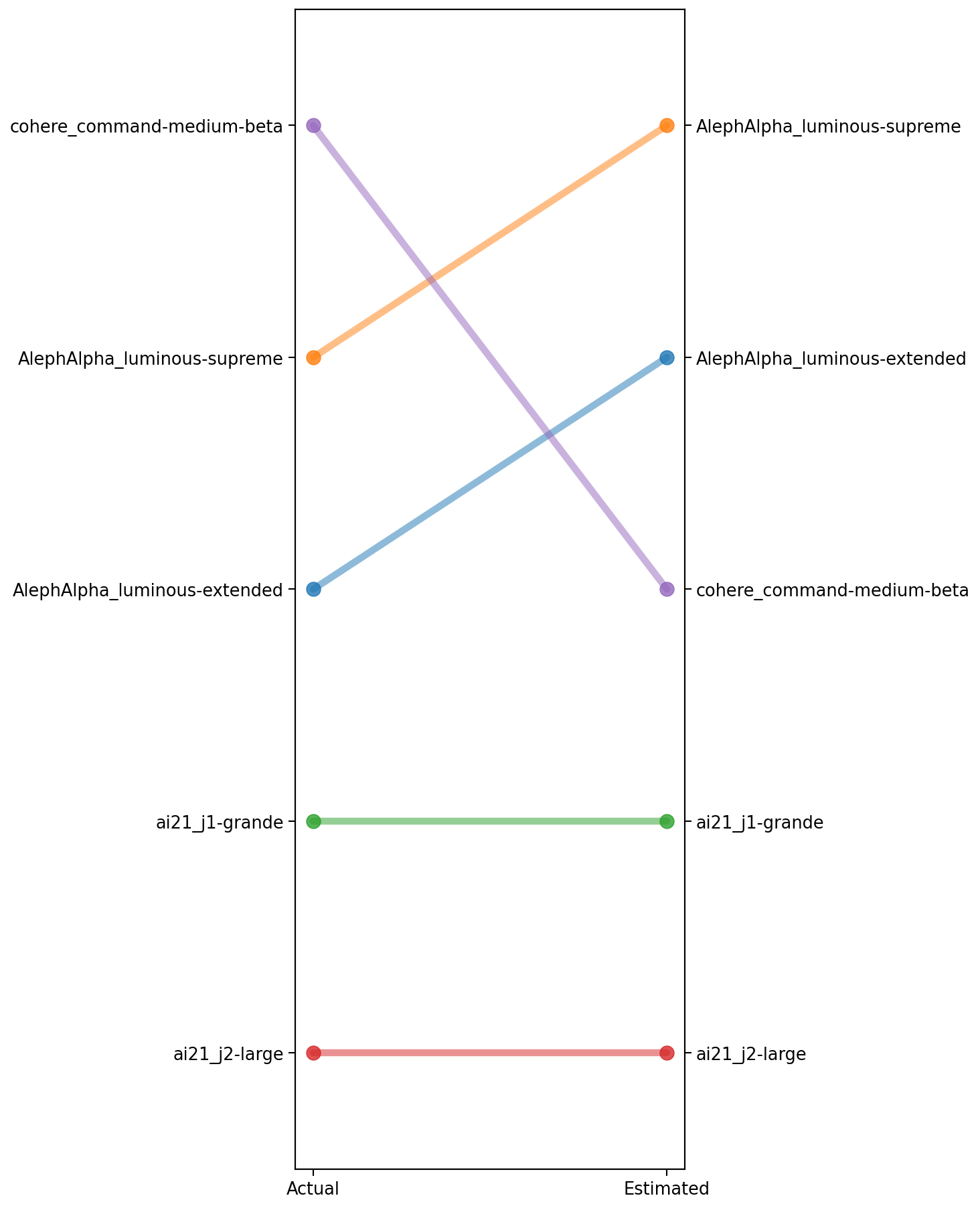}
         \caption{GTR, 5 models, close samples}
         \label{fig:bump:d}
     \end{subfigure}
        \caption{Sample bump charts showing how rankings estimated by our methods (right) compare to those derived using reference data (left). Best ranks on top.}
        \label{fig:app:summ:bump}
\end{figure*}

\subsection{Multiple Choice}
\label{subsec:multiplechoice}

For simulating this data, we generate a matrix of responses, with each row denoting a questions and each column denoting a model (of known accuracy). Each answer takes on a discrete value depending on the number of possible answers. We consider different sizes of prompts, i.e. $[100, 500]$, number of possible answers ranging from $2$ to $50$, and models of varying accuracy. The accuracy is set to be between $90$ and $10$ and experiments are run with various combinations to simulate model performance differences. As evaluation function, $g$ in FTR and $f$ in GTR, we consider the equality operator. In addition we also experimented with a noisy equality operator, that randomly flipped the outcome of equality.

The intent of the experiments was to stress test our ranking methods. Figure \ref{fig:app:synth:rbo-vs-nmodels} plots the full set of results over all combinations when the evaluations are based on equality. Generally, the ensembling method of MCA performs well with FTR showing comparable performance. See also Figure \ref{fig:app:synth:map5-vs-nmodels} for similar trends in the MAP-5 metric. 

When the number of possible answers is low, the triplet evaluations face difficulty in disambiguation. For all methods, when the possible outcomes are $2$ for example, RBO values are low. Weak models in these cases unite to promote the wrong answers.

Generally, rankings can be recovered reasonably well (provided there are sufficient possible answers) even if the best model performance is at $50\%$. 

\begin{figure*}
    \centering
    \includegraphics[width=\textwidth]{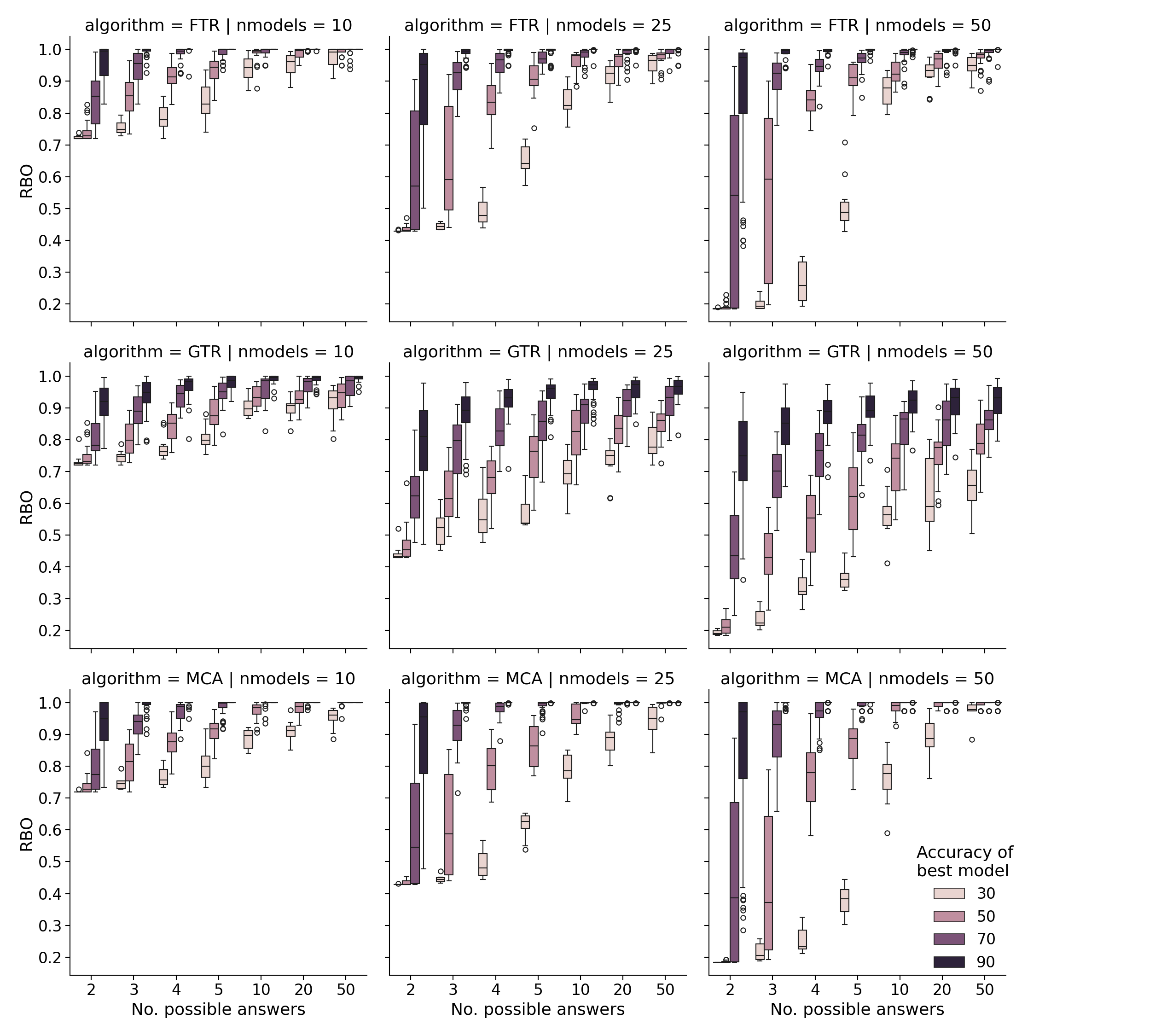}
    \caption{Multiple choice: Full version of Figure \ref{fig:exp:synth:metrics} showing summary of RBO for all runs.}
    \label{fig:app:synth:rbo-vs-nmodels}
\end{figure*}

\begin{figure*}
    \centering
    \includegraphics[width=\textwidth]{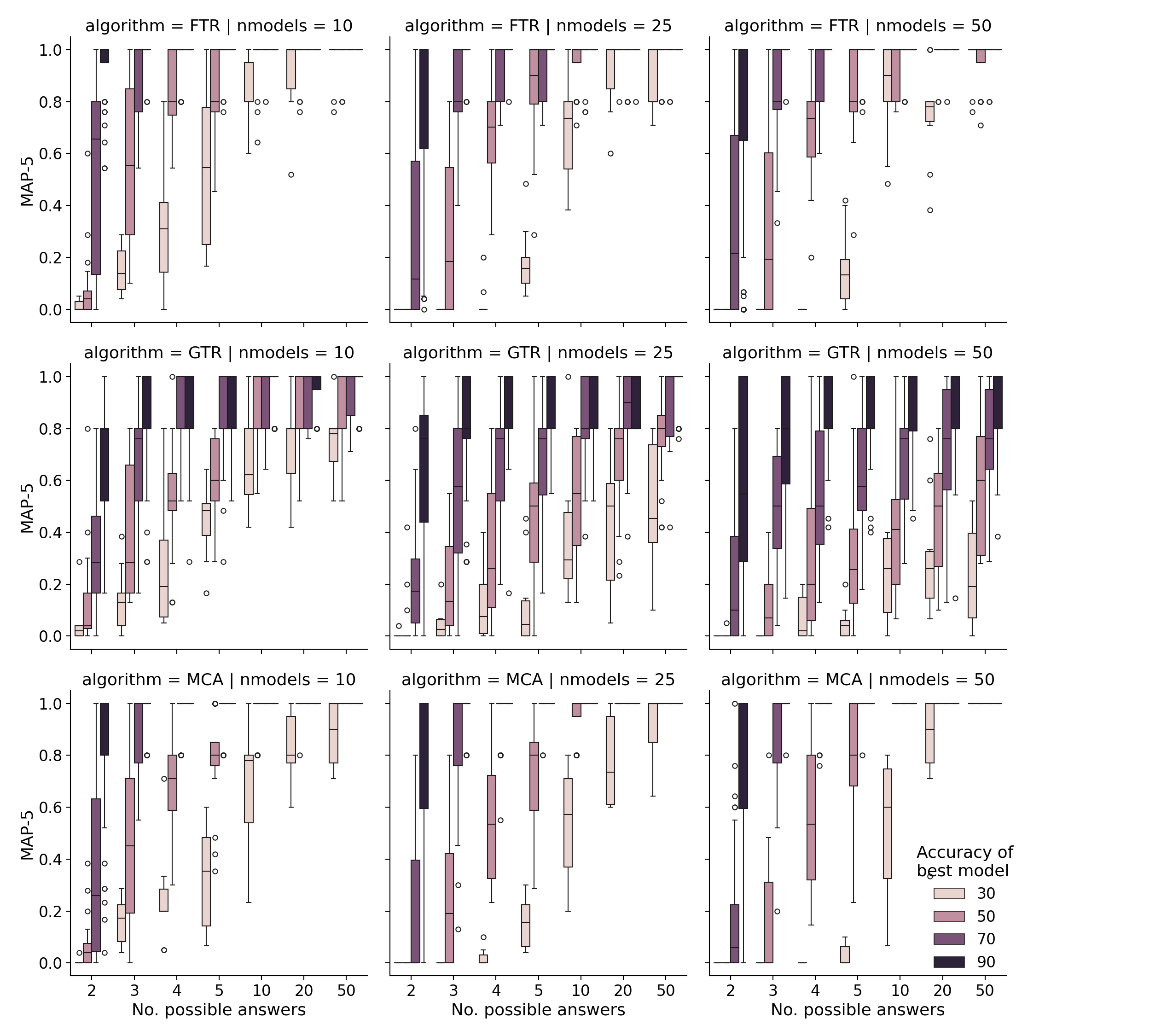}
    \caption{Multiple choice: MAP-5 for all for all runs.}
    \label{fig:app:synth:map5-vs-nmodels}
\end{figure*}

On noisy equality experiments, where the triplet evaluations are randomly perturbed with a known probability, we see that efficacy of our methods decline with increasing noise in Figure \ref{fig:app:synth:noise}. FTR and even GTR are more robust than MCA at low-medium levels of noise. This is shown for some specific sizes of instances, the patterns are similar for other cases as well.

\begin{figure*}
     \centering
     \begin{subfigure}[b]{0.9\textwidth}
         \centering
         \includegraphics[width=\textwidth]{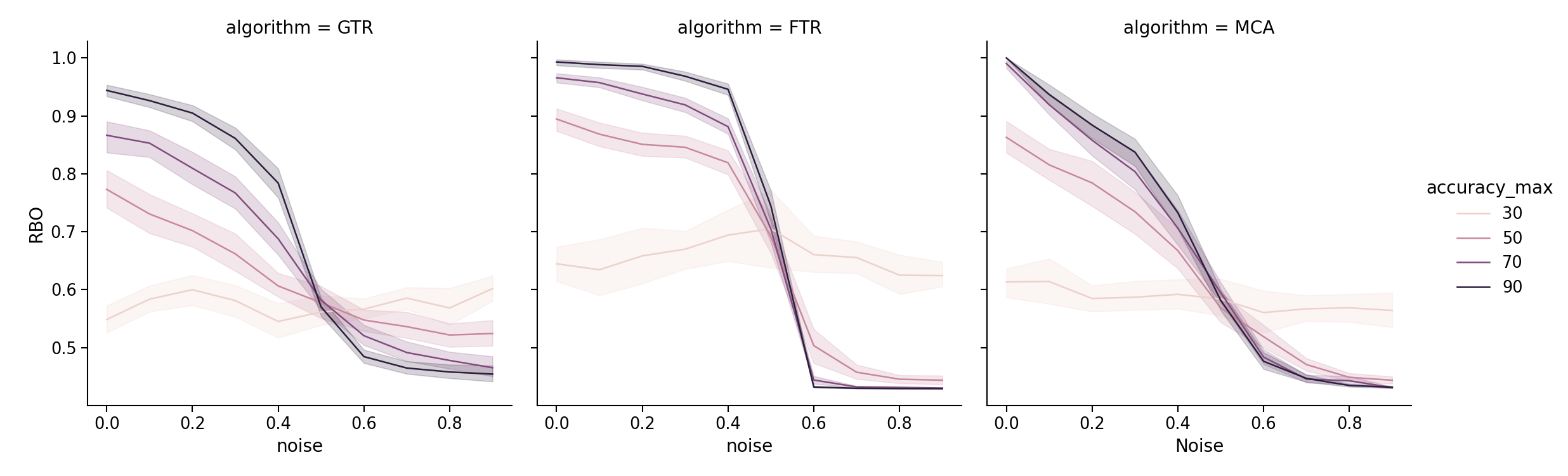}
         \caption{25 models, 5 questions}
         \label{fig:app:synth:noise:1}
     \end{subfigure}
     \begin{subfigure}[b]{0.9\textwidth}
         \centering
         \includegraphics[width=\textwidth]{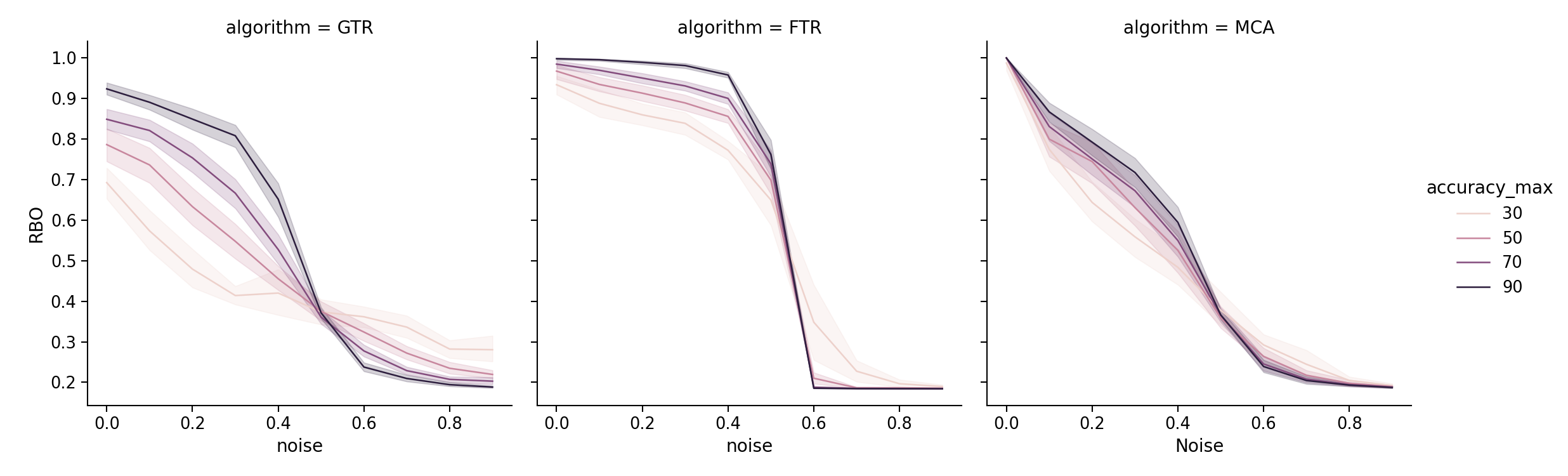}
         \caption{50 models, 50 questions}
         \label{fig:app:synth:noise:2}
     \end{subfigure}
        \begin{subfigure}[b]{0.9\textwidth}
         \centering
         \includegraphics[width=\textwidth]{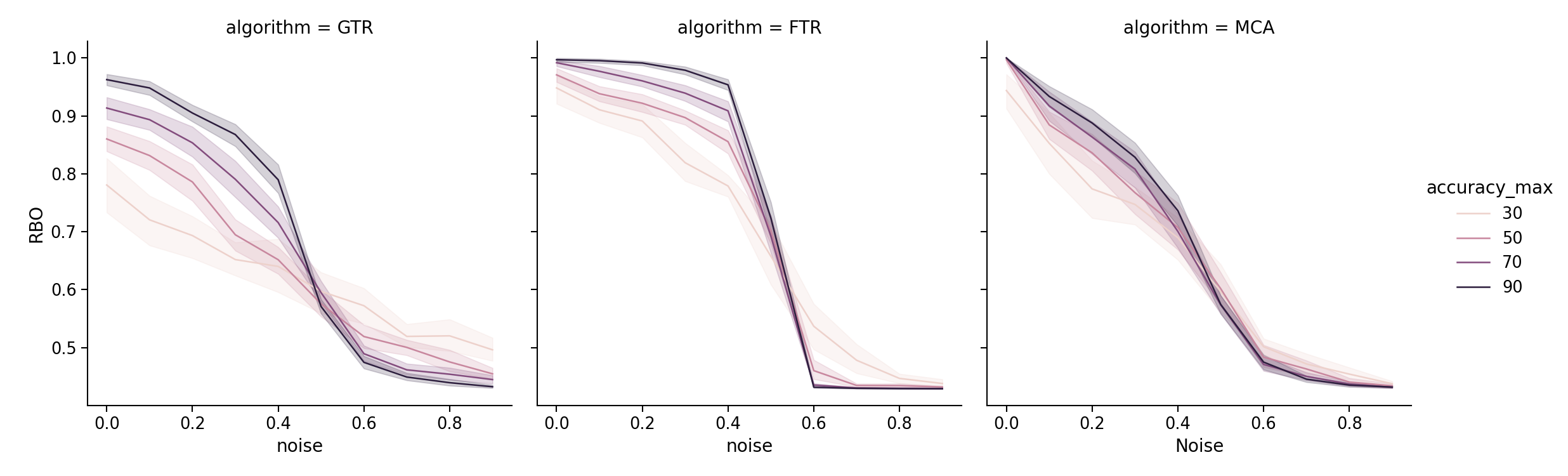}
         \caption{25 models, 50 questions}
         \label{fig:app:synth:noise:3}
     \end{subfigure}
          \begin{subfigure}[b]{0.9\textwidth}
         \centering
         \includegraphics[width=\textwidth]{figures/synth-rbo-vs-noise-10-50.png}
         \caption{10 models, 50 questions}
         \label{fig:app:synth:noise:4}
     \end{subfigure}
        \caption{Impact of noisy evaluations on multiple choice data for selected cases.}
        \label{fig:app:synth:noise}
\end{figure*}

\subsection{Models used}
\label{app:models}
Table \ref{tab:app:models} shows the list of models used in the HELM experiments for summarization. 

\begin{table}[]
    \centering
\begin{tabular}{rl}
\toprule
No. & Model \\
\midrule
1 & AlephAlpha\_luminous-base \\
2 & AlephAlpha\_luminous-extended \\
3 & AlephAlpha\_luminous-supreme \\
4 & ai21\_j1-grande \\
5 & ai21\_j1-grande-v2-beta \\
6 & ai21\_j1-jumbo \\
7 & ai21\_j1-large \\
8 & ai21\_j2-grande \\
9 & ai21\_j2-jumbo \\
10 & ai21\_j2-large \\
11 & anthropic\_stanford-online-all-v4-s3 \\
12 & cohere\_command-medium-beta \\
13 & cohere\_command-xlarge-beta \\
14 & cohere\_large-20220720 \\
15 & cohere\_medium-20220720 \\
16 & cohere\_medium-20221108 \\
17 & cohere\_small-20220720 \\
18 & cohere\_xlarge-20220609 \\
19 & cohere\_xlarge-20221108 \\
20 & microsoft\_TNLGv2\_530B \\
21 & microsoft\_TNLGv2\_7B \\
22 & openai\_ada \\
23 & openai\_babbage \\
24 & openai\_curie \\
25 & openai\_davinci \\
26 & openai\_gpt-3.5-turbo-0301 \\
27 & openai\_text-ada-001 \\
28 & openai\_text-babbage-001 \\
29 & openai\_text-curie-001 \\
30 & openai\_text-davinci-002 \\
31 & openai\_text-davinci-003 \\
32 & together\_bloom \\
33 & together\_gpt-j-6b \\
34 & together\_gpt-neox-20b \\
35 & together\_opt-175b \\
36 & together\_opt-66b \\
37 & together\_redpajama-incite-base-3b-v1 \\
38 & together\_yalm \\
39 & writer\_palmyra-instruct-30 \\
40 & writer\_palmyra-x \\
\bottomrule
\end{tabular}
    \caption{Models used in summarization experiments}
    \label{tab:app:models}
\end{table}

\subsection{Synthetic Data Generation}
\label{subsec:app:synth}

Table \ref{tab:app:synth} shows an example of synthetic data generated for the multiple choice setting. We parameterize the generation on model accuracy, number of possible answers, and the number of questions in the dataset. 

\begin{table}[]
\begin{tabular}{lrrrrrr}
\toprule
 & GT & M1 & M2 & M3 & M4 & M5 \\
\midrule
Q0 & 0 & 0 & 0 & 0 & 7 & 0 \\
Q1 & 2 & 6 & 2 & 2 & 2 & 1 \\
Q2 & 1 & 1 & 1 & 1 & 0 & 7 \\
Q3 & 6 & 6 & 6 & 4 & 6 & 6 \\
Q4 & 5 & 5 & 1 & 5 & 5 & 5 \\
\bottomrule
\end{tabular}
    \caption{Sample generated dataset with model accuracies in the range $50-80\%$, with $10$ possible answers per question. GT denotes the ground truth, i.e. true label, and M{1-5} denote simulated model responses.}
    \label{tab:app:synth}
\end{table}

\section{Most Common Answer}
\label{subsec:app:mca}

We now present additional details on the Most-Common Answer (MCA) method that we use as a baseline. When the context is multiple choice, the concept of the most common answer is straight forward. You assign the label with the majority vote from all the LLMs being tested as the true label. For instance, in a multiple choice setting with $4$ options $A, B, C, D$, if from five different models you gather responses for a specific question as $A, A, B, C, D$, then $A$ would be considered as the true label. 

In generative settings, such as for summarization, what constitutes a `common' answer involves more work. To establish an MCA baseline, we first generate all n-grams from all model responses. We then use the most frequent n-grams across all responses upto a specific size, and consider that to be the `true' response. Specific answers from the models are then evaluated against this to generate the closest response. 

This is illustrated using a simple example. Consider three models' responses to the question \emph{What is the capital of Canada?} in Table \ref{tab:app:canada}. From all the responses, we compute frequencies of all bigrams to get \texttt{'ta': 3, 'nt': 2, 'Ot': 2, 'tt': 2, 'aw': 2, 'wa': 2,'To': 1,'or': 1,'ro': 1,'on': 1,'to': 1,'a,': 1,', ': 1,' O': 1,'On': 1,'ar': 1,'ri': 1,'io': 1}. From this, we select the top-k bigrams to constitute the most-common answer. In this example we assume the top five, but in the summarization experiments we consider k to be 256. We can now compute similarity measures of each model response relative to this most-common answer. Rouge-2 scores, as an example are shown in Table \ref{tab:app:canada} indicating that M3 has the best response.

\begin{table}[hb]
    \centering
\begin{tabular}{llr}
\toprule
 Model & Response & Rouge-2 w.r.t. MCA \\
\midrule
M1 & Toronto & 0.117\\
M2 & Ottawa, Ontario & 0.480\\
M3 & Ottawa & \textbf{0.500}\\ \hline
MCA & ta: 3, nt: 2, ot: 2, & \\
     &   tt: 2, aw: 2 & - \\
\bottomrule
\end{tabular}
    \caption{Illustration of model responses and most-common answer (MCA) for the question: `What is the capital of Canada?'}
    \label{tab:app:canada}
\end{table}

\section{LLM as a Judge}
\label{subsec:app:prometheus}

We prompt Prometheus-2 \cite{kim2024prometheus} to compare the results of two summaries from the CNN/DM summarization dataset. This is done for all pairs of LLMs and for a sample of $50$ questions in the benchmark. For each evaluation, the LLM Judge declares a winning LLM. Our task criteria is simply `Which is the better response?'. Generally, human annotators judging summarization quality do so along specific attributes such as coherence, fluency, consistency, relevance, and \cite{fabbri2021summeval}. We additionally compared Prometheus with human annotations along these dimensions and find poor agreement. For this task, the LLM Judge is overly pessimistic across all dimensions, see Figure \ref{fig:app:prom}.
\begin{figure*}
    \centering
    \includegraphics[width=\textwidth]{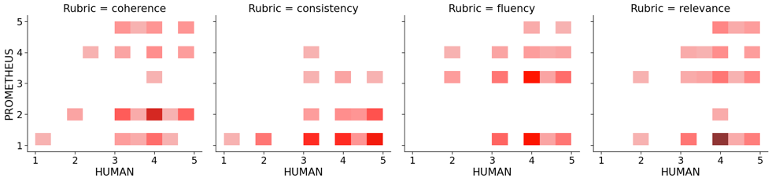}
    \caption{Comparison of Human annotations with LLM Judge (Prometheus) annotations based on the SummEval dataset \cite{fabbri2021summeval}.}
    \label{fig:app:prom}
\end{figure*}

Next we considered pairwise evaluations, specifically, we prompted Prometheus-2 with the following prompt template:
\small
\begin{verbatim}
"""###Task Description:
An instruction (might include an Input inside it), 
a response to evaluate, and a score rubric 
representing  a evaluation criteria are given.
1. Write a detailed feedback that assess the 
quality of two responses strictly based on the 
given score rubric, not evaluating in general.
2. After writing a feedback, choose a better 
response between Response 1 and Response 2. 
You should refer to the score rubric.
3. The output format should look as follows: 
"Feedback: (write a feedback for criteria) 
[RESULT] (1 or 2)"
4. Please do not generate any other opening, 
closing, and explanations.
###Instruction:
{instruction}
###Response 1:
{response_1}
###Response 2:
{response_2}
###Score Rubric:
{rubric}
###Feedback: 
"""
\end{verbatim}
\normalsize

No additional checks for positional bias were performed. 

\end{document}